\theoremstyle{plain}
\newtheorem{theorem}{Theorem}[section]
\newtheorem{lemma}[theorem]{Lemma}
\newtheorem{remark}[theorem]{Remark}
\theoremstyle{definition}
\newtheorem{definition}[theorem]{Definition}
\newtheorem{assumption}[theorem]{Assumption}
\title{A Gift from Label Smoothing: Robust Training with \\ Adaptive Label Smoothing via Auxiliary Classifier under Label Noise}
\author {
    % Authors
    Jongwoo Ko\textsuperscript{\rm 1}\thanks{The two authors contributed equally.},
    Bongsoo Yi\textsuperscript{\rm 2}$^{*}$,
    Se-Young Yun\textsuperscript{\rm 1}
}
\begin{document}

\maketitle

\begin{abstract}
% As label noise, one of the most popular distribution shifts, severely degrades deep neural networks' generalization performance, robust training with noisy labels is becoming an important challenge in modern deep learning. 
As deep neural networks can easily overfit noisy labels, robust training in the presence of noisy labels is becoming an important challenge in modern deep learning.
% While existing methods address this problem in various directions, these methods still produce unpredictable sub-optimal results, since they rely on the estimated posterior information from the corrupted feature extractor towards noise labels.
%While existing methods address this problem in various directions, they still produce unpredictable sub-optimal results, since they rely on the estimated posterior information from the corrupted feature extractor towards noise labels.
While existing methods address this problem in various directions, they still produce unpredictable sub-optimal results since they rely on the posterior information estimated by the feature extractor corrupted by noisy labels.
% \textcolor{red}{Lipschitz regularization is known to be successful in training a robust feature extractor, but it requires multiple training steps and is computationally expensive.} 
% \textcolor{red}{Lipschitz regularization successfully mitigates this problem by training a robust feature extractor, but it requires longer training time and expensive computations.}
Lipschitz regularization successfully alleviates this problem by training a robust feature extractor, but it requires longer training time and expensive computations.
Motivated by this, we propose a simple yet effective method, called ALASCA, which efficiently provides a robust feature extractor under label noise.
%This paper introduces a simple yet effective method, called ALASCA, for robust training of feature extractors under label noise. 
ALASCA integrates two key ingredients: (1) adaptive label smoothing based on our theoretical analysis that label smoothing implicitly induces Lipschitz regularization, and (2) auxiliary classifiers that enable practical application of intermediate Lipschitz regularization with negligible computations.
% We conduct extensive experiments for ALASCA and combine ALASCA with previous noise-robust methods on several synthetic and real-world datasets.
We conduct wide-ranging experiments for ALASCA and combine our proposed method with previous noise-robust methods on several synthetic and real-world datasets.
% \textcolor{red}{Through experimental results, we} show our framework consistently improves the robustness of feature extractors and the performance of existing baselines with efficiency.}
Experimental results show that our framework consistently improves the robustness of feature extractors and the performance of existing baselines with efficiency.
% Furthermore, we verify that our method also works well on various network architectures (i.e, LSTM, Transformers) and other distribution shift setup types such as class imbalance.
Our code is available at \href{https://github.com/jongwooko/ALASCA}{https://github.com/jongwooko/ALASCA}.
\end{abstract}
\section{Introduction}
% 문단 첫 문장만 읽어도 내용의 흐름을 이해할 수 있게 하기.
% deep learning은 효과가 좋다 하지만 label이 clean한 경우에만 해당한다

% label noise에 대한 문제점
While deep neural networks (DNNs) have high expressive power that leads to promising performances, the success of DNNs heavily relies on the quality of training data, in particular, accurately labeled training examples. 
% However, data annotation inevitably introduces label noise, which has a drastic effect on the generalization performances of DNNs .
% As DNNs can essentially memorize any labeling of data\,\cite{nettleton2010study, zhang2017understanding}, label noise has a drastic effect on the generalization performance of DNNs. 
Unfortunately, labeling large-scale datasets is a costly and error-prone process, and even high-quality datasets contain incorrect labels\,\cite{nettleton2010study, zhang2017understanding}.
Hence, mitigating the negative impact of noisy labels is critical, and many approaches have been proposed to improve robustness against noisy data for learning with noisy labels (LNL).

% 그동안 어떻게 문제를 해결해 왔는지 + 기존 방법들의 단점
% Many approaches have been proposed to improve the robustness against noisy data for learning with noisy labels (LNL). 
Robustness to label noise is typically pursued by identifying noisy samples to reduce their contribution to the loss\,\cite{han2018co, mirzasoleiman2020coresets}, correcting labels\,\cite{yi2019probabilistic, li2020dividemix}, utilizing a robust loss function\,\cite{zhang2018generalized,wang2019symmetric}.
% However, one of the biggest challenges for LNL methods is to provide a dependable criterion to select or re-weight clean data and differentiate them from noisy data, such that the clean data can be fully exploited while the noisy data are filtered out.
However, one of the biggest challenges of LNL methods involves providing a dependable criterion for distinguishing clean data from noisy data, such that clean data is fully exploited while filtering noisy data.
% While these existing methods are effective in mitigating label noise, since their criterion for identifying noisy examples uses posterior information of such a linear classifier or the penultimate layer of the corrupted network, \textcolor{red}{the network may suffer from unpredictable bias \cite{nguyen2020self, kim2021fine} which reduces the ability to separate between clean and noise instances.}
% 1. While these existing methods are effective in mitigating label noise, their criterion for identifying noisy examples uses biased posterior information of such a linear classifier or the penultimate layer of the corrupted network. The network may suffer from unpredictable biases \cite{nguyen2020self, kim2021fine} that reduce the ability to separate clean and noise instances.
While these existing methods are partially effective in mitigating label noise, their criterion for identifying noisy examples uses biased posterior information from a linear classifier or the penultimate layer of the corrupted network. These unpredictable biases can lead to a reduction in the network's ability to separate clean and noisy instances \cite{nguyen2020self, kim2021fine}.
% 3. Since their criterion for identifying noisy examples uses biased posterior information of such a linear classifier or the penultimate layer of the corrupted network, the network may suffer from unpredictable biases \cite{nguyen2020self, kim2021fine} that reduce the ability to separate clean and noise instances.

% Despite these undesired biases, robust training of feature extractor has been under-explored.
% \textcolor{blue}{Intuitively, when the representation from the feature extractor is good enough, the classifier's decision boundary can easily identify correct answers even if there is strong noise.}

% feature extractor를 학습시키는 것의 단점 그 방법들의 단점
To solve this undesired bias, several regularization methods \cite{xia2020robust, cao2021heteroskedastic} have been proposed to enhance the robustness of the feature extractor. However, while existing regularization-based learning frameworks alleviate the degradation, these methods require multiple training stages and considerable computational costs and are difficult to apply in practice. \citet{cao2021heteroskedastic} used two-stage training to compute the relative data-dependent regularization power to conduct Lipschitz regularization (LR) on intermediate layers. \citet{xia2020robust} identified and regularized the non-critical parameters that tend to fit noisy labels and require longer training time. Some studies\,\cite{zhang2020decoupling, zheltonozhskii2022contrast} have designed contrastive learning frameworks to generate high-quality feature extractors using unsupervised approaches, which require considerable computations for high performance.

% 기존 방법들과 어떻게 다른지
To mitigate these impractical issues, we provide a simple yet effective learning framework for a robust feature extractor, \textbf{A}daptive \textbf{LA}bel \textbf{S}moothing via auxiliary \textbf{C}l\textbf{A}ssifier (ALASCA), with theoretical guarantee and small additional computation. Our proposed method is robust to label noise itself and can further enhance the performance of existing LNL methods. Our main contributions are as follows: 
\begin{itemize}
    \item We theoretically explain that label smoothing (LS) implicitly induces LR, which is known to enable robust training with noisy labels \cite{finlay2018lipschitz, cao2021heteroskedastic}. 
    Through theoretical motivations, we empirically show that adaptive LS (ALS) can regularize noisy examples while fully exploiting clean examples.
    \item To practically implement adaptive LR on the intermediate layers, we propose ALASCA, which combines ALS with auxiliary classifiers. To the best of our knowledge, this is the first study to apply auxiliary classifiers under label noise with theoretical evidence.
    \item We experimentally demonstrate that ALASCA is universal by combining various LNL methods and validating that ALASCA consistently boosts robustness on benchmark-simulated and real-world datasets. 
    \item We verify that ALASCA effectively enhances the robustness of feature extractors by comparing the quality of subsets on sample-selection methods and robustness to the hyperparameter selection of LNL methods.
\end{itemize}
\section{Related Works}
\subsection{Learning with Noisy Labels}
\citet{zhang2017understanding} empirically demonstrated that convolutional neural networks trained with stochastic gradient methods easily memorize random labeling of the training data. To address this, numerous studies have examined the classification task with noisy labels. Existing methods address this problem by (1) filtering noisy examples and training using only clean examples\,\cite{han2018co, mirzasoleiman2020coresets, kim2021fine} or (2) relabeling noisy examples using the model itself or another model trained only on the clean dataset\,\cite{lee2018cleannet, li2020dividemix}. Some approaches focus on designing loss functions with robust behaviors and provable tolerance to label noise\,\cite{ghosh2017robust,zhang2018generalized,wang2019symmetric}. We fully describe these previous works in Appendix~\ref{app:related}.
% \textcolor{red}{Appendix}.

\paragraph{Regularization-based Methods.} Another line of work has attempted to design regularization-based techniques. For example, some studies have stated and theoretically analyzed how early-stopped model can prevent the memorization phenomenon of noisy labels\,\cite{arpit2017closer, song2019does}. Based on this, \citet{liu2020early} proposed an early learning regularization\,(ELR) loss function that avoids memorizing noisy data by leveraging semi-supervised learning\,(SSL) techniques. \citet{xia2020robust} clarified that neural network parameters cause memorization and proposed a robust training method for these parameters. Developing regularization at the prediction level has been addressed by smoothing one-hot vectors\,\cite{lukasik2020does} and distilling the rescaled predictions of other models \cite{muller2019does,kim2021comparing}. Recently, \citet{cao2021heteroskedastic} proposed a heteroskedastic adaptive regularization that applies stronger regularization to noisy instances.

\subsection{Label Smoothing}
LS\,\cite{szegedy2016rethinking, muller2019does} is commonly used to construct a generalized DNN model by preventing over-confident predictions. This regularization technique facilitates generalization by softening a ground-truth one-hot vector $\mathbf{y}$ with a weighted mixture of hard targets:
\begin{equation*}
    \mathbf{y}^{LS}:=(1-\alpha) \cdot \mathbf{y} +  \frac{\alpha}{L}  \cdot \mathbf{1}_L,
\end{equation*}
where $L$ denotes the number of classes, $\mathbf{1}_L$ denotes an all-one vector in $\mathbb{R}^{L}$, and $\alpha \in [0,1]$ is a smoothing parameter. 
\citet{lukasik2020does} claimed that LS denoises label noise by causing label correction and weight-shrinkage regularization effects. 
However, \citet{wei2021understanding} recently show that LS tends to over-smooth the estimated posterior under high levels of label noise, which can hurt robustness.
% \textcolor{red}{can hurt robustness.} 수정!
Moreover, several studies \cite{szegedy2016rethinking, pereyra2017regularizing, muller2019does, chorowski2017towardsBD} have validated that LS boosts model generalizability, and \citet{li2020regularization} proposed the need for data-dependent smoothing.
% Adaptive label smoothing adaptively smoothes the labels with data-dependent smoothing parameters.
\citet{li2020regularization} proposed structural LS, which selects smoothing strength data-dependently that minimizes the Bayes error rate bias.
\citet{ghoshal2021learningBS} derived PAC Bayesian generalization bounds for LS and proposed adaptive smoothing for the latent structure of the label space. 
% \citet{wang2021adaptiveLS} employed adaptive label smoothing to learn a better representation and obtain a stable mutual information estimate.
% and \citet{bahri2021locallyAL} showed adaptively smoothing based on one's local neighbor labels effectively reduces predictive churn, the sample-level inconsistency between models trained with the same training process.

\section{Methodology: ALASCA}
% \textcolor{red}{방법론 설명에 들어가야 할 것. Label Smoothing이 Lipschitz 효과가 있다. 우리는 Sub-Classifier에 Label Smoothing을 적용하여 간편하게 해본다.}
LR has been shown to be effective for DNNs\,\cite{gouk2021regularisation}. \citet{wei2019data, wei2019improved} theoretically and empirically show that LR
% \textcolor{blue}{regularizing the Lipschitzness} 
for all intermediate layers improves generalization of DNNs. 
% \textcolor{blue}{In Figure~\ref{fig:lr}, we observe that LR encourages to not overfit the noisy data and enhances the generalization under label noise, which supports previous studies.}
In Figure~\ref{fig:lr}, we observe that LR prevents overfitting noisy data and enhances generalization under label noise, which supports previous studies.
Furthermore, many studies show that different regularization strengths along the data points are essential. \citet{wang2013smoothing} and \citet{tibshirani2014adaptive} stated that smoothing splines with different smoothing parameters perform well in regression problems. Recently, \citet{cao2021heteroskedastic} showed that applying strong LR to highly uncertain data points improves generalization.
%We now use a special case of a theorem of \citet{cao2020heteroskedastic} recalling here only what we need.
%with some simplifications they show that 
% We state the key takeaways from Theorem 1 of \citet{cao2020heteroskedastic} and the following simplification process.
We recall key takeaways from \citet{cao2021heteroskedastic} that motivated our work.
% \textcolor{blue}{recalling here what motivated our work.}
\begin{remark}[\citealt{cao2021heteroskedastic}]\label{method:cao}
\textit{In a binary classification problem on one-dimensional data, the authors
     \lowercase\expandafter{\romannumeral1}) derived the formula for the asymptotic mean squared error (MSE) on the test set,
     \lowercase\expandafter{\romannumeral2}) with some simplifications, showed that the asymptotic MSE is minimized when the smoothing parameter is proportional to the $\frac{3}{5}$-th power of the label uncertainty.}
\end{remark}
% \noindent
% \textit{\textbf{Takeaways from \citet{cao2020heteroskedastic}.} 
% In a binary classification problem on one-dimensional data, the authors
%      \lowercase\expandafter{\romannumeral1}) derived the formula for the mean squared error (MSE) on the test set,
%      \lowercase\expandafter{\romannumeral2}) with some simplifications, showed that the MSE is minimized when the smoothing parameter is proportional to the $\frac{3}{5}$-th power of the label uncertainty.}
The exact theorem statement and detailed explanation may be found in Appendix~\ref{appendix:thm1}.
% \textcolor{red}{하지만 이러한 Lipschitz regularization 방식에는 문제가 있다. 계산량 등등 ... }
However, this explicit regularization requires multiple training phases to estimate and apply relative regularization power for different data points.
% \textcolor{blue}{Furthermore, the computation of the Hessian matrix due to directly regularizing the norm of Jacobian matrix increases the computation costs.} 
%\textcolor{blue}{Furthermore, the computations of the Hessian matrix which is introduced by directly regularizing the norm of the Jacobian matrices increase the computation costs.}
Furthermore, computing the Hessian matrix resulting from directly regularizing the norm of Jacobian matrices increases the computational cost \cite{filiposka2014complexity, nesser2021reduced}.
In this section, we present that LS implicitly incurs LR and introduce the simple unified framework for efficient learning with label noise, ALASCA. In principle, our method can be used with most LNL methods, such as noise-robust loss function \cite{zhang2018generalized, wang2019symmetric} and sample-selection methods \cite{han2018co, kim2021fine}.

\subsection{Label Smoothing as Lipschitz Regularization}
In this section, we analytically present our motivation that LS implicitly encourages LR.
Here, we formally define the notation and terminology for our problem. 

\begin{figure*}[t]
    \hspace*{\fill}
    \begin{subfigure}[b]{0.245\linewidth} % 0.4 % 0.23
    \centering
    \includegraphics[width=\linewidth]{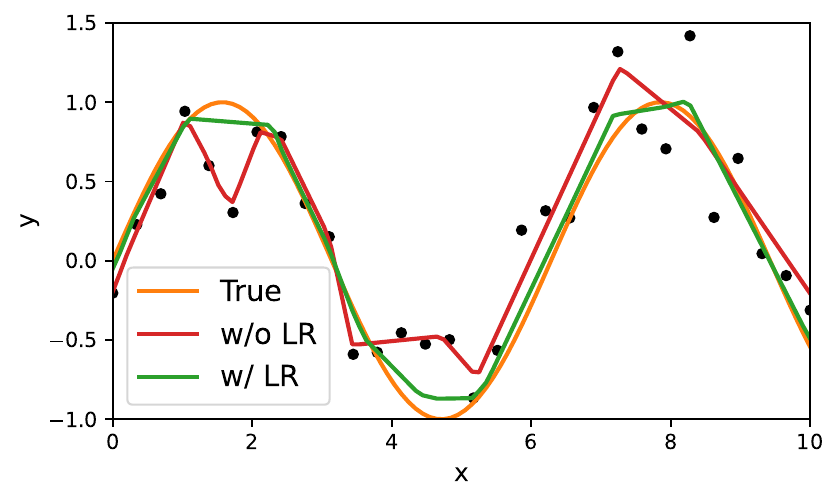}
    \caption{Effectiveness of LR}
    \label{fig:lr}
    \end{subfigure}
    \hfill
    \begin{subfigure}[b]{0.23\linewidth} % 0.4 % 0.23
    \centering
    \includegraphics[width=\linewidth]{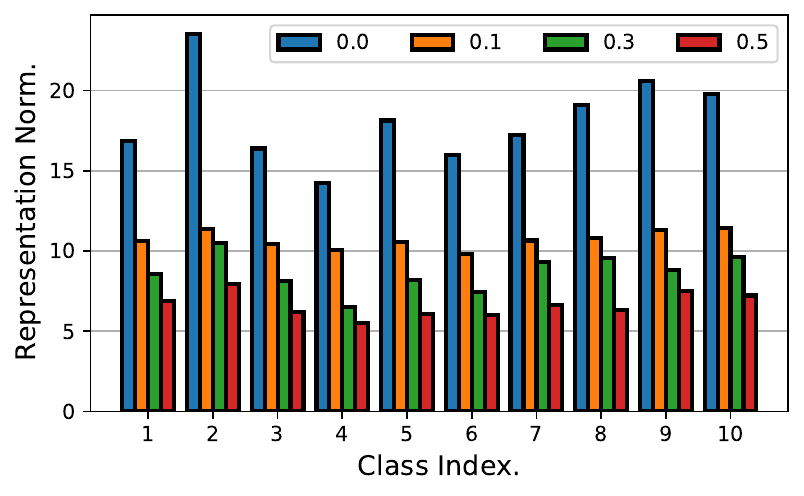}
    \caption{Representation Norm}
    \end{subfigure}
    \hfill
    \begin{subfigure}[b]{0.245\linewidth} % 0.4 % 0.23
    \centering
    \includegraphics[width=\linewidth]{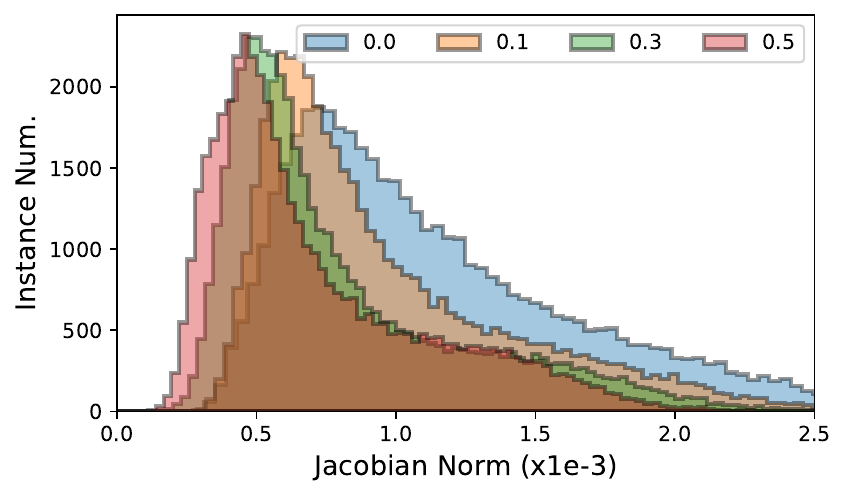}
    \caption{Jacobian Norm}
    \end{subfigure}
    \hspace*{\fill}
    \caption{(a) One-dimensional experiments under label noise with and without LR. We observe that LR effectively enhance the generalization under label noise. Comparison of (b) class-wise representation vector norm (Theorem~\ref{method:theorem1}), and (c) distribution of Jacobian matrix norm for penultimate layer (Theorem~\ref{method:theorem2}) across different smoothing factors on CIFAR-10.}\label{fig:ls}
    % 이건 개인적인 의견인데, 생각해보고 안해도 좋습니다. (c) 그래프가 peak가 비교적 차이가 안나는 것 처럼 보여서, x축을 한 2.0-2.5까지해서 x축으로 좀 늘려주면 차이가 많이나게 보이지 않을까 합니다.
\end{figure*}

\paragraph{Notation.} We focus on multiclass classification with $L$ classes. Assume that the data points and labels lie in $\mathcal{X} \times \mathcal{Y}$, where the feature space $\mathcal{X} \subset \mathbb{R}^{D}$ and label space $\mathcal{Y} = \left\{ 0, 1 \right\}^{L}$. A single data point $\mathbf{x}$ and its label $\mathbf{y}$ follow a distribution $(\mathbf{x}, \mathbf{y}) \sim P_{\mathcal{X} \times \mathcal{Y}}$. We aim to find a predictor $\mathbf{f}: \mathcal{X} \rightarrow \mathbb{R}^{L}$ minimizing the risk of $\mathbb{E}_{(\mathbf{x}, \mathbf{y}) \sim P_{\mathcal{X} \times \mathcal{Y}}} \left[ \ell(\mathbf{f}(\mathbf{x}), \mathbf{y}) \right]$ with loss function $\ell : \mathbb{R}^{L} \times \mathbb{R}^{L} \rightarrow \mathbb{R}_{+}$.

\begin{definition}[Lipschitzness] A function $f$ is called Lipschitz continuous with a Lipschitz constant $L_{f} \in [0, \infty)$ if
\begin{equation*}
    \| f(y) - f(x) \| \leq L_{f} \| y - x \|, 
\end{equation*}
for all $x, y \in \text{\textup{dom }} f$.
\end{definition}

\begin{definition}[Lipschitz Regularization]
%Let $\mathcal{F}=\{ \mathbf{f}: \mathbb{R}^{D} \rightarrow \mathbb{R}^{L} | \mathbf{f} \text{ is twice-differentiable} \}$
Let $\mathcal{F}$ be a twice-differentiable model family from $\mathbb{R}^{D}$ to $\mathbb{R}^{L}$. Lipschitz regularization aims to optimize the function with a smoothness penalty as follows:
\begin{equation*}
    \min_{\mathbf{f} \in \mathcal{F}} \frac{1}{N} \sum_{n=1}^{N} \ell (\mathbf{f}(\mathbf{x}_{n}), \mathbf{y}_{n}) + \lambda \| \mathbf{J}_{\mathbf{f}} (\mathbf{x}_{n}) \|_{F},
\end{equation*}
where $\lambda$ is the regularization coefficient, $N$ the number of training data points, $\mathbf{J}_{\mathbf{f}}$ the Jacobian matrix of $\mathbf{f}$, and $\| \cdot \|_{F}$ the Frobenius norm.
\end{definition}
% Our analysis is similar to \citet{cao2020heteroskedastic} rather than \cite{wei2019data} in terms of penalizing the norm of derivative for the surrogate functions of DNNs. While \cite{wei2019data} regularize the Jacobian of the loss function, both works are effective for Lipschitz regularization, and their practical implementation is the same. 
Here, we focus only on $\ell$ as cross-entropy (CE) loss. Compared with the CE loss with one-hot vector $\ell(\mathbf{f}(\mathbf{x}), \mathbf{y})$, the CE loss with LS $\ell(\mathbf{f}(\mathbf{x}), \mathbf{y}^{LS})$ of factor $\alpha$ can be presented as follows:
\begin{align*}
        \ell(\mathbf{f}(\mathbf{x}), \mathbf{y}^{LS}) &= (1 - \alpha) \cdot \ell(\mathbf{f}(\mathbf{x}), \mathbf{y}) + \frac{\alpha}{L} \cdot \ell(\mathbf{f}(\mathbf{x}), \mathbf{1}_{L}) \\
    & \propto  \ell(\mathbf{f}(\mathbf{x}), \mathbf{y}) + \frac{\alpha}{(1-\alpha) \cdot L} \cdot \Omega(\mathbf{f}).
\end{align*}
By denoting $f_{i}(\cdot)$ as the $i$-th element of logit vector  $\mathbf{f}(\cdot)$, the regularization term of LS is
\begin{equation}\label{eq:ls_regularization}
    \Omega(\mathbf{f}) = L \cdot \log \left[ \sum_{i=1}^{L} e^{f_{i}(\cdot)} \right] - \sum_{i=1}^{L} f_{i}(\cdot).
\end{equation}
% \textcolor{blue}{Therefore, we can write the objective function of LS as $\frac{1}{N} \sum_{n=1}^{N} \ell (\mathbf{f}(\mathbf{x}_{n}), \mathbf{y}_{n}) + \frac{\alpha}{(1-\alpha)\cdot L} \cdot \Omega(\mathbf{f(\mathbf{x}_{n})})$.}
Previously, \citet{lukasik2020does} suggested that LS encourages weight shrinkage in DNNs; however, this interpretation was validated only for linear models. 
To better understand LS, we consider that the DNN function can be presented as arbitrary surrogate models. 
%We suppose that the surrogate model $\mathbf{f}(\cdot) = \mathbf{g} \circ \mathbf{h}(\cdot)$ consists of $\mathbf{h}: \mathcal{X} \rightarrow \mathbb{R}^{Q}$ which is an arbitrary twice differentiable function for feature extractor and fixed linear classifier $\mathbf{g}(\mathbf{z}) = \mathbf{W}^{\intercal} \mathbf{z}$ with $\mathbf{W} \in \mathbb{R}^{Q \times L}$ and $\mathbf{z} \in \mathbb{R}^{Q}$. 
We suppose that the surrogate model $\mathbf{f}(\cdot) = \mathbf{g} \circ \mathbf{h}(\cdot)$ consists of an arbitrary twice-differentiable feature extractor $\mathbf{h}: \mathcal{X} \rightarrow \mathbb{R}^{Q}$ and fixed linear classifier $\mathbf{g}(\mathbf{z}) = \mathbf{W}^{\intercal} \mathbf{z}$ with $\mathbf{W} \in \mathbb{R}^{Q \times L}$ and $\mathbf{z} \in \mathbb{R}^{Q}$. 
% Now, we present our main results, whose proofs are all presented in Appendix \ref{app:proof}.
First, we find a minimizer of the regularization term of LS. The following assumption is required to guarantee the uniqueness of the minimizer. 

%To establish our theorem, we begin by stating our reasonable assumptions:
\begin{assumption}\label{thm:assumption1}
    $\mathbf{W}_1, \mathbf{W}_2, \cdots, \mathbf{W}_L$ is an affine basis of $\mathbb{R}^{Q}$, where $\mathbf{W}_i$ is the $i$-th column of $\mathbf{W}$ (i.e., $\mathbf{W}_2-\mathbf{W}_1, \mathbf{W}_3-\mathbf{W}_1, \cdots, \mathbf{W}_L-\mathbf{W}_1$ are linearly independent).
\end{assumption}

\begin{theorem}\label{method:theorem1}
$\mathbf{h}=\mathbf{0}$ is a minimizer of $\Omega \circ \mathbf{g}$. \, If Assumption \ref{thm:assumption1} holds, $\mathbf{h}=\mathbf{0}$ is the unique minimizer.
\end{theorem}

Note that $\mathbf{h}=\mathbf{0}$ is always a minimizer of Equation \eqref{eq:ls_regularization} without any assumptions.
The takeaway from Theorem \ref{method:theorem1} is that the regularization term of LS encourages $\mathbf{h}(\mathbf{x}_1),\dots,\mathbf{h}(\mathbf{x}_N)$ to shrink to zero.
However, this does not ensure the shrinkage of the Jacobian matrix of $\mathbf{f}$.
The following theorem shows that this is true under a common Lipschitz assumption. We state the assumption and present our next main result.

\begin{assumption}\label{thm:assumption2}
    Each gradient $\nabla h_i(\mathbf{x})$ is Lipschitz continuous with a Lipschitz constant $L_h$ for all $i$, where $\mathbf{h}(\mathbf{x})=(h_1(\mathbf{x}), h_2(\mathbf{x}), \cdots, h_Q(\mathbf{x}))$.
\end{assumption}

\begin{theorem}\label{method:theorem2}
Consider a bounded feature space $\mathcal{X}$ and suppose that Assumption \ref{thm:assumption2} is satisfied. If $\mathbf{h}(\mathbf{x}_n)=\mathbf{0}$ for $1\leq n \leq N$, $\left \|  \mathbf{J_f}(\mathbf{x}_n)\right \|_F \rightarrow \mathbf{0}$ as $N \rightarrow \infty$ holds for $1\leq n \leq N$.
\end{theorem}
Theorem~\ref{method:theorem2} states that the smoothness of the feature extractor function in the training set induces LR. 
By combining Theorem~\ref{method:theorem1} and \ref{method:theorem2}, we conclude that the regularization term of LS encourages LR. 
The detailed proof of the theorems may be found in Appendix~\ref{appendix:thm2}. To validate our theoretical perspective, we conducted
% \textcolor{blue}{suggest} 
the following exploratory experiments: compare the (1) class-wise average norm values of representation vectors and (2) Jacobian matrix norm from the penultimate layer of ResNet34 trained on CIFAR-10. %\textcolor{blue}{As Figure~\ref{fig:ls} shows, as the smoothing factor becomes larger, both representation norm and Jacobian matrix norm to zero, which means that the regularization term of LS induces the representation vector to origin (Theorem~\ref{method:theorem1}) and implicitly incurs LR (Theorem~\ref{method:theorem2}).}
As Figure~\ref{fig:ls} shows, both the representation and Jacobian matrix norms decrease to zero as the smoothing factor increases, indicating that the LS regularization term induces the representation vector to the origin (Theorem~\ref{method:theorem1}) and implicitly incurs LR (Theorem~\ref{method:theorem2}).

\begin{figure*}[t]
    \hspace*{\fill}
    \begin{subfigure}[b]{0.2835\textwidth} % 0.315
    \centering
    \includegraphics[width=\linewidth]{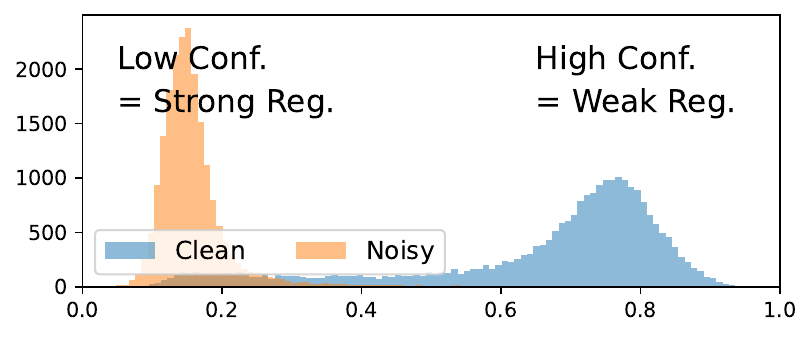}
    \caption{Concept of ALS}
    \end{subfigure}
    \hfill
    \begin{subfigure}[b]{0.27\textwidth} % 0.4 % 0.23
    \centering
    \includegraphics[width=\linewidth]{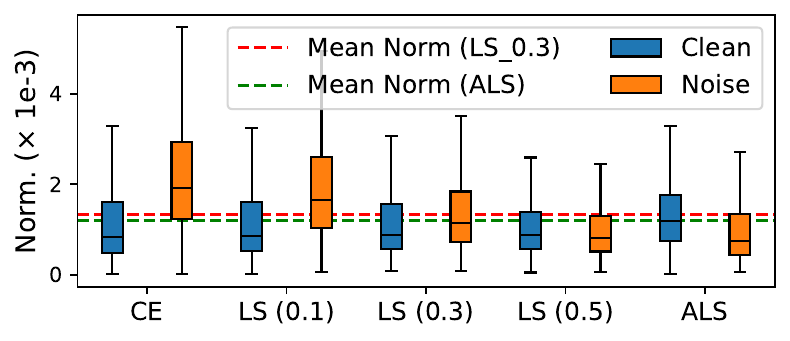}
    \caption{Jacobian Norm (Symm 50\%)}
    \end{subfigure}
    \hfill
    \begin{subfigure}[b]{0.27\textwidth} % 0.4 % 0.23
    \centering
    \includegraphics[width=\linewidth]{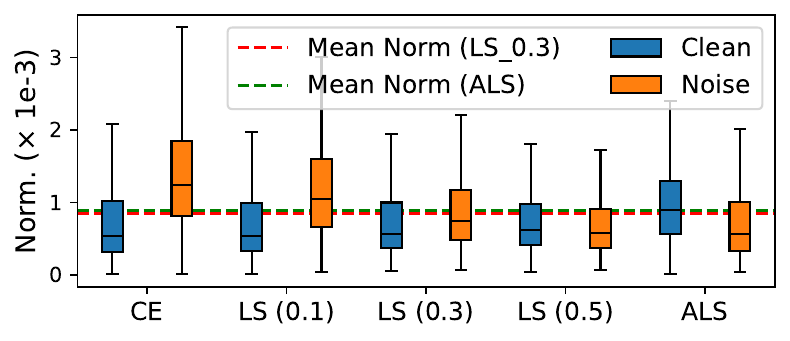}
    \caption{Jacobian Norm (Asym 40\%)}
    \end{subfigure}
    % \hfill
    % \begin{subfigure}[b]{0.235\textwidth} % 0.4 % 0.23
    % \centering
    % \includegraphics[width=\linewidth]{}
    % \caption{Performance}
    % \end{subfigure}
    \hspace*{\fill}
    \caption{(a) The concept of ALS. Because noisy instances tend to have a lower confidence, we conduct stronger regularization on lower confidence instances. (b), (c) Comparison of Jacobian matrix norms for the penultimate representation on CIFAR-10 with 50\% of symmetric and 40\% of asymmetric noise, respectively. While the mean of the Jacobian norm across all instances are similar in LS (0.3) and ALS, ALS applies different powers of regularization for clean and noisy instances.}\label{fig:als}
\end{figure*}

\paragraph{Adaptive Regularization.} %Our perspective can explain data-dependent regularization, while \citet{lukasik2020does}, which interprets LS as a regularization on the weight matrix shared by all data points, cannot. 
% \textcolor{blue}{Our perspective can explain data-dependent regularization, while \citet{lukasik2020does} cannot. They interpret LS as regularizing a weight matrix shared by all data points.}
% Our perspective can explain data-dependent regularization, whereas \citet{lukasik2020does} cannot. They interpreted LS as regularizing a weight matrix shared by all data points.
% \textcolor{blue}{This perspective of LS enables us to adaptive LR to apply different smoothing factors.} % 수정!
From our perspective of LS, we can apply adaptive LR using different smoothing factors.
\citet{cao2021heteroskedastic} shows that applying stronger LR to highly uncertain data points improves generalization on noisy datasets. To implement adaptive LR through LS, we design the smoothing factor of LS proportional to the $1 - \Pr (y|\mathbf{x})$ for each instance. 
While \citet{cao2021heteroskedastic} suggests that the optimal smoothing parameter is proportional to the 3/5-th power of the label uncertainty, our proposed strategy shows similar performances despite its simplicity.
% The loss function with ALS is as follows:
Consequently, we use the following loss function for ALS:
\begin{equation}\label{eq:als}
    \ell^{ALS}_{\tilde{\alpha}(\mathbf{x})}(\mathbf{f}(\mathbf{x}), \mathbf{y}) = (1 - \tilde{\alpha}(\mathbf{x})) \cdot \ell (\mathbf{f}(\mathbf{x}), \mathbf{y}) + \tilde{\alpha}(\mathbf{x}) \cdot \Omega(\mathbf{f}),
\end{equation}
where $\tilde{\alpha}(\mathbf{x})$ is the $y$-th element of $1-\mathcal{S}(\mathbf{f}(\mathbf{x}))$, and $\mathcal{S}$ is the softmax function. 
%While \citet{cao2021heteroskedastic} suggests that the optimal 
% value is proportional to the 3/5-th power of the label uncertainty, our proposed strategy works well despite its simplicity.
%\textcolor{red}{value is proportional to the 3/5-th power of the label uncertainty, our proposed strategy shows similar performances despite its simplicity.}
Figure~\ref{fig:als} shows that the proposed ALS enables us to mainly regularize noisy examples because we conduct stronger regularization for lower confidence examples, where noisy examples take up a large proportion. While uniform LS (ULS) equally regularizes the smoothness of both clean and noisy examples, we validate that ALS can apply the appropriate LR for each example.

\begin{figure}[t]
    \hspace*{\fill}
    \begin{subfigure}[b]{0.48\linewidth} % 0.4 % 0.23
    \centering
    \includegraphics[width=\linewidth]{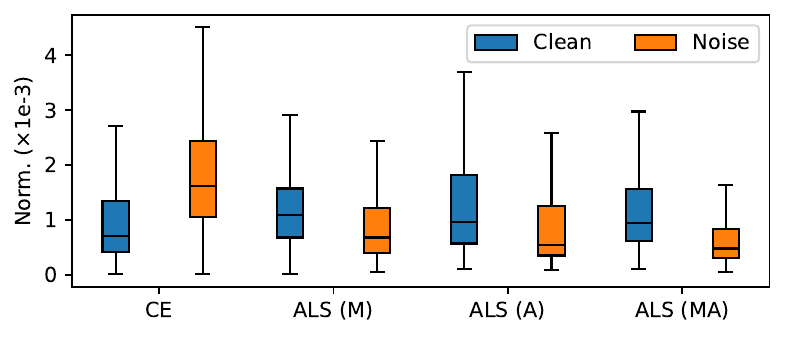}
    \caption{Jacobian Norm}
    \end{subfigure}
    \hfill
    \begin{subfigure}[b]{0.465\linewidth} % 0.4 % 0.23
    \centering
    \includegraphics[width=\linewidth]{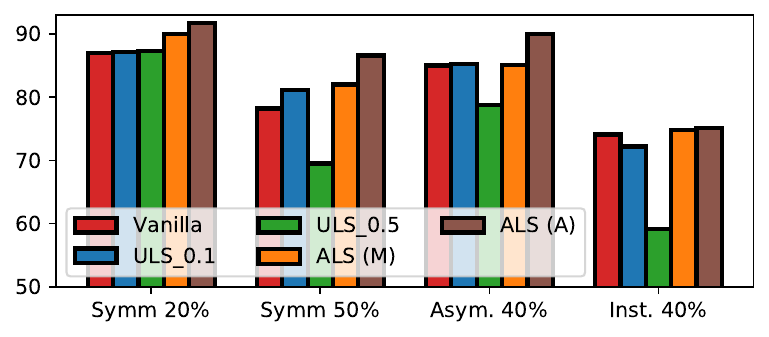}
    \caption{Performance}
    \end{subfigure}
    \hspace*{\fill}
    \caption{Comparisons of (a) Jacobian matrix norm of penultimate layer on CIFAR-10 under 50\% of symmetric noise; (b) performance on CIFAR-10 under various label noise for conducting uniform LS (ULS) and ALS on main classifier\,(M), auxiliary classifier\,(A), and both (MA).}\label{fig:auxiliary}
\end{figure}

\begin{figure}[t]
    \hspace*{\fill}
    \begin{subfigure}[b]{0.43\linewidth} % 0.4 % 0.23
    \centering
    \includegraphics[width=\linewidth]{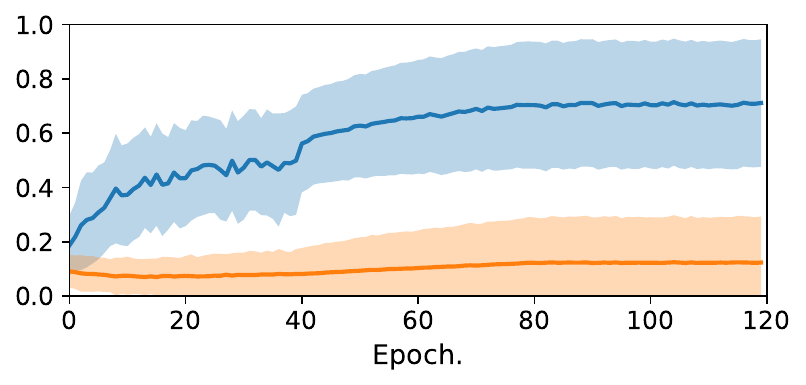}
    \caption{Symm 50\% (Inst.)}
    \end{subfigure}
    \hfill
    \begin{subfigure}[b]{0.43\linewidth} % 0.4 % 0.23
    \centering
    \includegraphics[width=\linewidth]{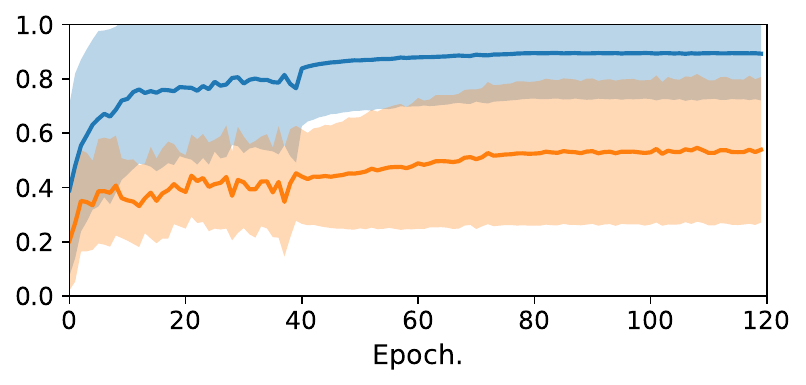}
    \caption{Asym 40\% (Inst.)}
    \end{subfigure}
    \hspace*{\fill} \\
    \hspace*{\fill}
    \begin{subfigure}[b]{0.43\linewidth} % 0.4 % 0.23
    \centering
    \includegraphics[width=\linewidth]{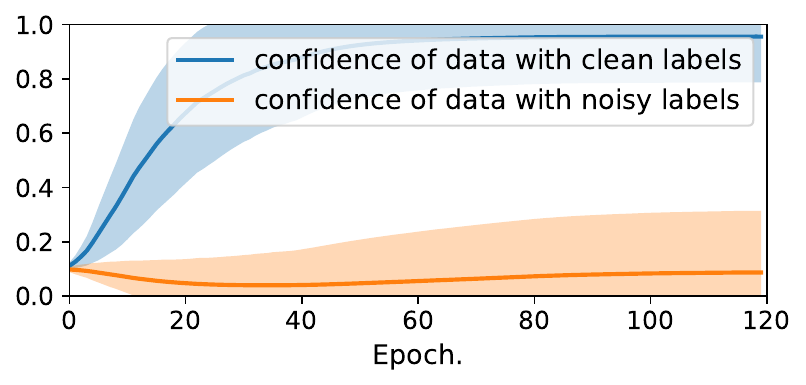}
    \caption{Symm 50\% (EMA)}
    \end{subfigure}
    \hfill
    \begin{subfigure}[b]{0.43\linewidth} % 0.4 % 0.23
    \centering
    \includegraphics[width=\linewidth]{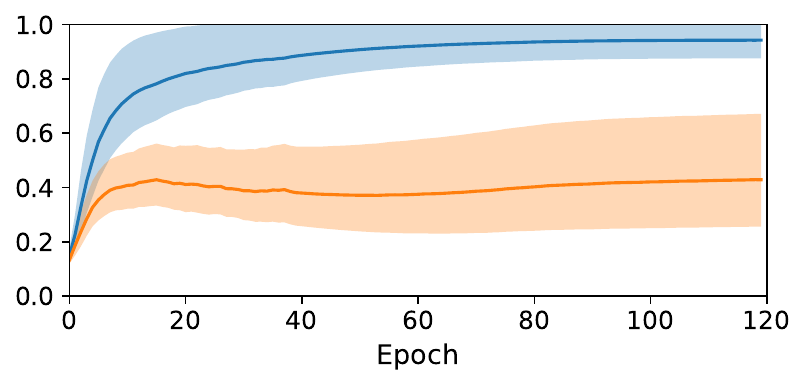}
    \caption{Asym 40\% (EMA)}
    \end{subfigure}
    \hspace*{\fill}
    \caption{Dynamic patterns (mean $\pm$ std) of instantaneous (first row) and EMA confidence (second row) suggested in ALASCA for CIFAR-10 under various noise types.}\label{fig:ema}
\end{figure}

\subsection{Combination with Additional Techniques}\label{sec:3.2}
Now, we integrate ALS with auxiliary classifiers (AC) and the exponential moving averaged (EMA) confidence to practically regularize the smoothness of intermediate layers. We describe the overall algorithm of ALASCA in Algorithm~\ref{alg:alasca}.

\begin{algorithm}[t]
\caption{ALASCA}\label{alg:alasca}
\textbf{Require}: $\left\{ \mathbf{x}_{i}, \mathbf{y}_{i} \right\}$, $1 \leq i \leq N$ \\
\textbf{Require}: $\mathcal{L}$ \Comment{Loss function for existing LNL methods}\\
\textbf{Require}: $\left\{ \Theta_{k} \right\}_{k=0}^{K}$ \Comment{Parameters for main and ACs} \\
\textbf{Require}: $\beta$, $\tau$ \Comment{EMA weight and temperature of ALASCA} \\
\textbf{Require}: $\lambda$ \Comment{Coefficient for power of regularization}
\textbf{Output}: $\Theta_{0}$
\begin{algorithmic}[1] %[1] enables line numbers
\State $\textbf{t}_{i} \leftarrow \mathbf{0}_{\text{N}}$. \Comment{Initialize EMA confidence}
% \For{each epoch}
\For{each minibatch B}
\State $\textbf{t}_{i} \leftarrow \beta \textbf{t}_{i} + (1-\beta) \mathbf{f}_{\Theta_{0}}(\mathbf{x}_{i})$ \Comment{EMA (Averaging)}
\State $\tilde{\alpha}(\mathbf{x}_{i}) \leftarrow 1 - \mathcal{S}(\mathbf{t}_{i} / \tau)$ \Comment{EMA (Sharpening)}
\State loss $\leftarrow$ $-\frac{1}{\vert B \vert} \sum_{i=1}^{\vert B \vert} \mathcal{L} \left( \mathbf{f}_{\Theta_{0}}(\mathbf{x}_{i}), \mathbf{y}_{i} \right)$
\State \textcolor{white}{loss $\leftarrow$} $+\frac{\lambda}{\vert B \vert} \sum_{k=1}^{K} \sum_{i=1}^{\vert B \vert} \ell^{ALS}_{\tilde{\alpha}(\mathbf{x}_{i})} \left( \mathbf{f}_{\Theta_{k}}(\mathbf{x}_{i}), \mathbf{y}_{i} \right)$
\State \textcolor{gray}{/* Compute loss by using Eq. \eqref{eq:als} */}
\State update $\Theta_{0}$ and $\left\{ \Theta_{k} \right\}_{k=1}^{K}$ using SGD
\EndFor
% \EndFor
\end{algorithmic}
\end{algorithm}

\begin{table*}[t]
\centering
\caption{Test accuracies (\%) on CIFAR-10/-100 under different noise types and fractions for noise-robust loss and sample-selection approaches. The results for symmetric and asymmetric noise of all baseline methods were taken from \citet{kim2021fine}. Instance-dependent noise results are reported by our re-implementation based on official codes. The average accuracies over three trials are reported. The best results sharing the noisy fraction and method are highlighted in bold.}
\scriptsize{ %\tiny scriptsize
\begin{tabular}{l|ccccc|cccc} \toprule
Dataset      & \multicolumn{5}{c}{CIFAR-10}     & \multicolumn{4}{|c}{CIFAR-100}       \\ \midrule
Noisy Type   & \multicolumn{3}{c}{Symm.} & Asym. & Inst. & \multicolumn{3}{c}{Symm.} & Asym. \\ \midrule
Noise Ratio  & 20\% & 50\% & 80\% & 40\% & 40\% & 20\% & 50\% & 80\% & 40\%  \\ \midrule \midrule
Standard  & 87.0 $\pm$ 0.1 & 78.2 $\pm$ 0.8 & 53.8 $\pm$ 1.0 & 85.0 $\pm$ 0.1 & 74.1 $\pm$ 2.9 & 58.7 $\pm$ 0.3 & 42.5 $\pm$ 0.3 & 18.1 $\pm$ 0.8 & 42.7 $\pm$ 0.6 \\
\; \textbf{+ ALASCA} & \textbf{92.2 $\pm$ 0.2} & \textbf{88.0 $\pm$ 0.3} & \textbf{70.3 $\pm$ 0.4} & \textbf{90.3 $\pm$ 0.3} & \textbf{81.4 $\pm$ 0.3} & \textbf{70.6 $\pm$ 0.2} & \textbf{59.7 $\pm$ 0.4} & \textbf{26.1 $\pm$ 0.9} & \textbf{59.0 $\pm$ 0.6} \\
\midrule
GCE  & 89.8 $\pm$ 0.2 & 86.5 $\pm$ 0.2 & 64.1 $\pm$ 1.4 & 76.7 $\pm$ 0.6 & 55.7 $\pm$ 0.2 & 66.8 $\pm$ 0.4 & 57.3 $\pm$ 0.3 & 29.2 $\pm$ 0.7 & 47.2 $\pm$ 1.2 \\
\; \textbf{+ ALASCA} & \textbf{92.3 $\pm$ 0.3} & \textbf{88.4 $\pm$ 0.3} & \textbf{73.7 $\pm$ 1.5} & \textbf{90.2 $\pm$ 0.1} & \textbf{73.5 $\pm$ 1.2} & \textbf{70.8 $\pm$ 0.5} & \textbf{60.1 $\pm$ 0.4} & \textbf{32.2 $\pm$ 0.5} & \textbf{57.3 $\pm$ 0.5}  \\ 
\midrule
SCE & 89.8 $\pm$ 0.3 & 84.7 $\pm$ 0.3 & 68.1 $\pm$ 0.8 & 82.5 $\pm$ 0.5 & 71.4 $\pm$ 1.3 & 70.4 $\pm$ 0.1 & 48.8 $\pm$ 1.3 & 25.9 $\pm$ 0.4 & 48.4 $\pm$ 0.9 \\
\; \textbf{+ ALASCA} & \textbf{92.8 $\pm$ 0.1} & \textbf{88.5 $\pm$ 0.1} & \textbf{71.7 $\pm$ 0.8} & \textbf{89.4 $\pm$ 0.2} & \textbf{78.0 $\pm$ 0.6} & \textbf{71.4 $\pm$ 0.2} & \textbf{61.3 $\pm$ 0.6} & \textbf{28.7 $\pm$ 0.6} & \textbf{57.3 $\pm$ 0.7} \\
\midrule
ELR & 91.2 $\pm$ 0.1 & 88.2 $\pm$ 0.1 & 72.9 $\pm$ 0.6 & 90.1 $\pm$ 0.5 & 79.8 $\pm$ 0.2 & 74.2 $\pm$ 0.2 & 59.1 $\pm$ 0.8 & 29.8 $\pm$ 0.6 & 73.3 $\pm$ 0.4 \\
\; \textbf{+ ALASCA} & \textbf{92.3 $\pm$ 0.2} & \textbf{89.5 $\pm$ 0.3} & \textbf{74.2 $\pm$ 1.2} & \textbf{90.4 $\pm$ 0.2} & \textbf{82.2 $\pm$ 0.5} & \textbf{74.8 $\pm$ 0.1} & \textbf{63.6 $\pm$ 0.6} & \textbf{34.4 $\pm$ 0.4} & \textbf{73.9 $\pm$ 0.5} \\
\midrule
Co-teaching & 89.3 $\pm$ 0.3 & 83.3 $\pm$ 0.6 & 66.3 $\pm$ 1.5 & 88.4 $\pm$ 2.8 & 70.5 $\pm$ 0.5 & 63.4 $\pm$ 0.0 & 49.1 $\pm$ 0.4 & 20.5 $\pm$ 1.3 & 47.7 $\pm$ 1.2 \\
\; \textbf{+ ALASCA} & \textbf{93.4 $\pm$ 0.1} & \textbf{90.1 $\pm$ 1.5} & \textbf{71.1 $\pm$ 1.2} & \textbf{91.2 $\pm$ 0.1} & \textbf{76.8 $\pm$ 0.4} & \textbf{75.5 $\pm$ 0.3} & \textbf{68.1 $\pm$ 0.4} & \textbf{42.2 $\pm$ 1.2} & \textbf{64.7 $\pm$ 0.4} \\ 
\midrule
CRUST & 89.4 $\pm$ 0.2 & 87.0 $\pm$ 0.1 & 64.8 $\pm$ 1.1 & 82.4 $\pm$ 0.0 & 64.7 $\pm$ 2.1 & 69.3 $\pm$ 0.2 & 62.3 $\pm$ 0.2 & 21.7 $\pm$ 0.7 & 56.1 $\pm$ 0.5 \\
\; \textbf{+ ALASCA} & \textbf{92.3 $\pm$ 0.2} & \textbf{87.6 $\pm$ 0.2} & \textbf{71.5 $\pm$ 1.5} & \textbf{90.2 $\pm$ 0.2} & \textbf{71.6 $\pm$ 1.1} &  \textbf{70.4 $\pm$ 0.1} & \textbf{64.1 $\pm$ 0.9} & \textbf{25.5 $\pm$ 0.7} & \textbf{58.3 $\pm$ 0.5} \\
\midrule
FINE & 91.0 $\pm$ 0.1 & 87.3 $\pm$ 0.2 & 69.4 $\pm$ 1.1 & 89.5 $\pm$ 0.1 & 82.4 $\pm$ 0.5 & 70.3 $\pm$ 0.2 & 64.2 $\pm$ 0.5 & 25.6 $\pm$ 1.2 & 61.7 $\pm$ 1.0 \\
\; \textbf{+ ALASCA} & \textbf{92.1 $\pm$ 0.1} & \textbf{88.0 $\pm$ 0.1} & \textbf{70.6 $\pm$ 0.9} & \textbf{90.3 $\pm$ 0.2} & \textbf{84.3 $\pm$ 1.2} & \textbf{70.9 $\pm$ 0.3} & \textbf{65.8 $\pm$ 0.2} & \textbf{29.4 $\pm$ 1.5} & \textbf{63.5 $\pm$ 0.7} \\ 
\bottomrule
\end{tabular}
}
\label{tab:cifar-main}
\end{table*}

\paragraph{Use of Auxiliary Classifiers.} Recent studies have emphasized LR of intermediate layers. \citet{wei2019data, wei2019improved} showed that LR for all intermediate layers improves generalization of DNNs. \citet{sokolic2017robust} and \citet{cao2021heteroskedastic} showed similar results in data-limited and distribution shift setups, respectively. Inspired by these works, we use LS as an intermediate LR by utilizing an AC commonly used in deep learning. While ACs work well on various domains such as self-distillation\,\cite{zhang2019your} and class-imbalance\,\cite{lee2021abc}, our work first proposes using ACs with noisy labels based on theoretical motivation.
We compare our method with \citet{zhang2019your} in Appendix~\ref{sec:byot}.
We apply bottleneck\,\cite{howard2017mobilenets} architecture as ACs and attach them at the end of all blocks of backbone by our results in Section~\ref{sec:4.3}.
By applying ACs, we can encourage greater LR to noisy instances and weaker LR to clean instances with only a small additional computation. Furthermore, this approach has two additional advantages.
\citet{wei2021smooth} showed that the benefits of LS disappear in a high-level noise, as applying LS tends to over-smooth the estimated posterior of the main classifier.
However, LS with ACs does not affect the main classifier but effectively regularizes the Lipschitzness of intermediate layers. 
In Figure~\ref{fig:auxiliary}, the Jacobian norm is effectively reduced regardless of which classifier ALS is applied to, but the performances of ALS on ACs are higher than that of the main classifier.
Moreover, as we use multiple classifiers, we obtain more robust predictions from the ensemble effect during inference.
In Algorithm~\ref{alg:alasca}, we denote the parameters of the main classifier and ACs as $\Theta_{0}$ and $\left\{ \Theta_{k} \right\}_{k=1}^{K}$, where $K$ is number of AC. We further denote outputs of the $k$-th classifier as $\mathbf{f}_{\Theta_{k}}(\cdot)$.

\paragraph{Use of EMA Confidence.} As shown in Figure~\ref{fig:ema}, instantaneous confidence suffers from high variance across training epochs and is inaccurate for differentiating regularization power between clean and noisy instances. Incorrect regularization power caused by such instability leads to performance degradation. Hence, we use EMA along the epochs to compute confidence and effectively obtain the appropriate regularization power. In SSL, this weight averaging approach has been proposed to mitigate confirmation bias \cite{liu2020early}. The computation procedure of EMA confidence is as follows. (1) To reduce variance and enhance stability, we conduct EMA on output values (Line 3 in Algorithm~\ref{alg:alasca}). (2) Because the averaged outputs are over-smooth, which causes weak regularization for noisy examples and strong regularization for clean examples, we sharpen the EMA logits by dividing sharpen temperature $\tau$ (Line 4 in Algorithm~\ref{alg:alasca}). We observe that regularization powers on clean and noisy examples are clearly distinguished and become stable after using EMA confidence, as shown in Figure~\ref{fig:ema}.
% \begin{enumerate}
%     \item (Average) We conduct EMA procedure on logit values. With EMA logit $\mathbf{z}_{\text{EMA}}$ and instantaneous logit at $t$-th epoch $\mathbf{z}_{t}$ the equation for average procedure is $\mathbf{z}_{\text{EMA}} = w_{\text{EMA}} \cdot \mathbf{z}_{\text{EMA}} + (1 - w_{\text{EMA}}) \cdot \mathbf{z}_{t}$. In our experiments, we set the initial values for $\mathbf{z}_{\text{EMA}} = \mathbf{0}$ and EMA weights $w_{\text{EMA}}$ as 0.7.
%     \item (Sharpen) Since the averaged logits are over-smooth, which causes weak regularization for noisy examples and strong regularization for clean examples. Hence, we conduct sharpening to the EMA logits by dividing sharpen temperature $\tau$.
% \end{enumerate}

\section{Experiments}
We design experiments to answer the following questions: % 
\begin{itemize}
    \item Can ALASCA improve existing LNL methods, such as noise-robust loss functions and sample-selection methods for both synthetic and real-world datasets? (Section~\ref{sec:4.1}\,\&\,\ref{sec:4.4})
    \item How effective is ALACSA in improving the robustness of the feature extractor? (Section~\ref{sec:4.2})
    \item How do the architecture and position of ACs affect the performance and efficiency? \& Which component is important to the performance in ALASCA? (Section~\ref{sec:4.3})
    % \item \textcolor{red}{Can ALASCA be used compatibly with self-supervised framework, which is }
    % \item Can ALASCA be used for other network architecture, such as Transformers, or distribution shift settings, such as long-tailed learning?
\end{itemize}

\begin{figure*}[t]
    \hspace*{\fill}
    \begin{subfigure}[b]{0.27\textwidth} % 0.30
    \centering
    \includegraphics[width=\linewidth]{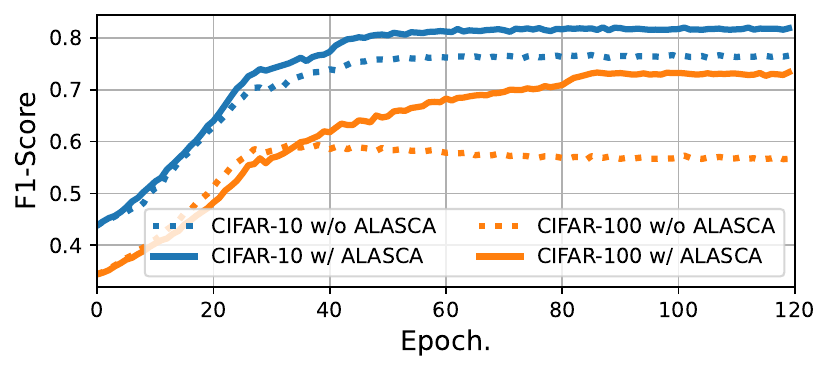}
    \caption{Co-teaching (F1-Score)}
    \end{subfigure}
    \hfill
    \begin{subfigure}[b]{0.27\textwidth} % 0.4 % 0.23
    \centering
    \includegraphics[width=\linewidth]{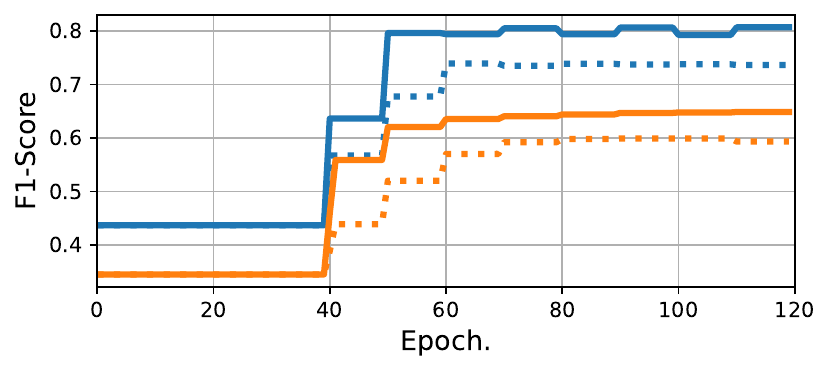}
    \caption{FINE (F1-Score)}
    \end{subfigure}
    \hfill
    \begin{subfigure}[b]{0.27\textwidth} % 0.4 % 0.23
    \centering
    \includegraphics[width=\linewidth]{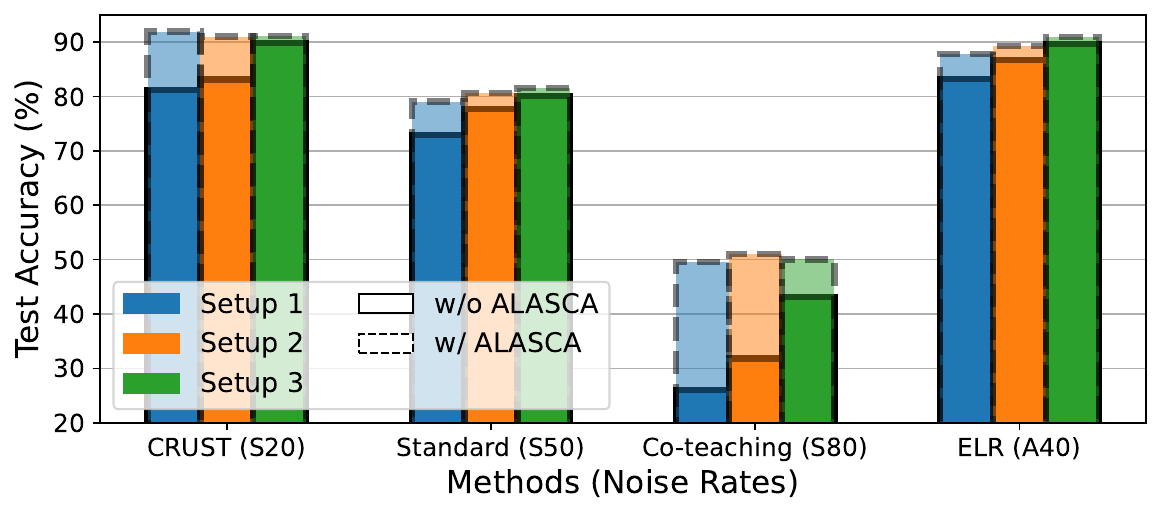}
    \caption{Hyperparameter Selection}\label{fig:hyperparam}
    \end{subfigure}
    \hspace*{\fill}
    \caption{Comparison of F1-scores of (a) Co-teaching; (b) FINE with and without ALASCA on CIFAR-10 and CIFAR-100 under 80\% of symmetric noise. (c) Comparison of test accuracies (\%) along different hyperparameter settings for various LNL methods. Through the results, we verify that ALASCA enhances the robustness of feature extractors.}\label{fig:quality}
\end{figure*}

\subsection{Experimental Setup and Results on CIFAR}\label{sec:4.1}
\paragraph{Setup.} We inject uniform randomness into a fraction of labels for symmetric noise and flip labels to specific classes for asymmetric noise by following \citet{kim2021fine}. 
To set up instance-dependent noise, we follow the noise generation of \citet{cheng2020learning}. 
We use the architectures of backbone network and hyperparameter settings for all baseline experiments following \citet{kim2021fine}. 
We set $\beta$, $\tau$, and $\lambda$ as 0.7, 1/3, and 2.0, respectively. The detailed experimental setup is described in Appendix~\ref{app:4.0} and \ref{app:4.1}.
To verify the superiority of our method, we combine ALASCA with various existing LNL methods (noise-robust loss functions and sample-selection methods) and identify that ours consistently improves the generalization in the presence of noisy data. Furthermore, we perform additional experiments incorporating semi-supervised approaches with ALASCA. Appendix~\ref{sec:semi} provides detailed description and results for the SSL approaches.

\paragraph{Noise-Robust Loss Functions.} Noise-robust loss functions aim to achieve high performances for unseen clean data despite the presence of noisy labels in the training data. We combine our proposed method with three loss functions: (1) standard CE (Standard); (2) generalized CE (GCE; \citealt{zhang2018generalized}), which can be seen as a generalization of the mean absolute error and standard CE; (3) symmetric CE (SCE; \citealt{wang2019symmetric}), which is the weighted sum of CE and reverse version of CE; and (4) early learning regularization (ELR; \citealt{liu2020early}) which uses a regularization term that incorporates target probabilities from the model output. We observe that ALASCA improves generalization when applied to the noise-robust loss function of Table~\ref{tab:cifar-main}.
% \textcolor{red}{We observe that ALASCA enhances generalization in the application of noise-robust loss functions in Table~\ref{tab:cifar-main}.}

\paragraph{Sample-Selection Methods.} Sample-selection methods, which select clean sample candidates from the training dataset, are a popular direction in LNL. 
We combine ALASCA with the following sample-selection approaches: (1) Co-teaching \cite{han2018co}, which utilizes two networks, extracts subsets of examples with small losses from each network, and trains each network with subsets of examples filtered by another network; (2) CRUST \cite{mirzasoleiman2020coresets}, which selects a subset of small weight gradient instances; and (3) FINE \cite{kim2021fine}, which selects instances whose penultimate vector highly correlates with the class-representative eigenvector. Table~\ref{tab:cifar-main} shows the performance increase of ALASCA with different sample-selection methods on various label noise.

\subsection{Quality of Feature Extractors}\label{sec:4.2}
Many LNL methods use posterior information with undesired bias from the corrupted networks under noisy labels, which can lead to sub-optimal performances. 
However, if the feature extractor is robustly trained under label noise, we can employ unbiased posterior information and achieve better generalization. 
To validate the effectiveness of ALASCA in terms of improving the robustness of the feature extractor and mitigating undesirable biases, we conduct exploratory experiments: (1) comparison of quality for subsets of sample-selection approaches and (2) robustness to hyperparameter selection of existing LNL methods. 

\begin{table*}[t]
\centering
\caption{Comparison of test accuracy (\%) on Clothing1M dataset. Results for baselines are obtained from \citet{liu2020early} and \citet{kim2021fine}. A-Coteaching and A-ELR+ denote the methods combining ALASCA with Co-teaching and ELR+.}
\footnotesize{ %\tiny scriptsize
\begin{tabular}{l|ccccccc|ccc} \toprule
Method  & Standard & GCE & SCE & Co-teaching & FINE & DivideMix & ELR+ & ALASCA & A-Coteaching & A-ELR+ \\ \midrule
Accuracy  & 68.94 & 69.75 & 71.02 & 70.15 & 72.91 & 74.76 & 74.81 & 73.78 & 74.20 & \textbf{74.92} \\ \bottomrule
\end{tabular}
}
\label{tab:clothing1m}
\end{table*}

\begin{table}[t]
\centering
\caption{Comparison of accuracy\,(\%), GPU memory, and computation time. For HAR and CDR, we apply their official code. Without and with * denote attaching ACs at the end of only the third and all residual blocks of the backbone network, respectively.}
% The average accuracies and standard deviation over three trials are reported.}
\scriptsize{ %\tiny scriptsize
\begin{tabular}{l|ccc|c|c} \toprule
Dataset      & \multicolumn{3}{c|}{CIFAR-10} & \multicolumn{2}{c}{Efficiency} \\ \midrule %\midrule \multicolumn{2}{c}{\multirow{2}{*}{Efficiency}}
Noisy Type & Symm. & Asym. & Inst. & Mem. & Time \\ \midrule \midrule
Standard  & 81.9 $\pm$ 0.3 &  85.0 $\pm$ 0.1 & 74.1 $\pm$ 2.9 & $\times$ 1.0 & $\times$ 1.0 \\
HAR  & 88.6 $\pm$ 0.1 & 87.5 $\pm$ 0.2 & 67.6 $\pm$ 0.7 & $\times$ 2.0 & $\times$ 5.2 \\
CDR  & 87.0 $\pm$ 0.2 & 88.7 $\pm$ 0.3 & 75.8 $\pm$ 1.5 & $\times$ 1.3 & $\times$ 6.5 \\ \midrule
\multicolumn{6}{l}{ALASCA (Different architecture and position of ACs)} \\ \midrule
MLP & 88.5 $\pm$ 0.3 & 89.4 $\pm$ 0.2 & 76.3 $\pm$ 0.3 & \underline{$\times$ 1.1} & \textbf{$\times$ 1.0} \\
MLP$^{*}$ & 89.3 $\pm$ 0.1 & 89.7 $\pm$ 0.1 & \underline{80.8 $\pm$ 0.3} & $\times$ 1.3 & $\times$ 1.2 \\
Bottleneck & 89.2 $\pm$ 0.1 & 89.8 $\pm$ 0.1 & 77.9 $\pm$ 0.4 & \textbf{$\times$ 1.0} & \textbf{$\times$ 1.0} \\
Bottleneck$^{*}$ & \textbf{90.1 $\pm$ 0.1} & \textbf{90.3 $\pm$ 0.3} & \textbf{81.4 $\pm$ 0.3} & \underline{$\times$ 1.1} & $\times$ 1.2 \\
Residual & 89.0 $\pm$ 0.2 & 89.6 $\pm$ 0.3 & 77.1 $\pm$ 0.3 & \underline{$\times$ 1.1} & \underline{$\times$ 1.1} \\
Residual$^{*}$ & \underline{89.8 $\pm$ 0.2} & \underline{90.1 $\pm$ 0.5} & 77.8 $\pm$ 0.5 & $\times$ 1.3 & $\times$ 1.3 \\
\bottomrule
\end{tabular}}
\label{tab:regularize-methods}
\end{table}

\paragraph{Quality of Sample Selection.} To verify that our proposed method robustly trains the feature extractor on label noise, we compute the F1-score for all training epochs to evaluate noise sample filtering in sample-selection methods for various symmetric and asymmetric label noise.
We compare the quality of sample selection with and without ALASCA on the sample-selection approaches: Co-teaching and FINE. In Figure~\ref{fig:quality}, the F1-scores of sample selection with ALASCA are consistently higher on various label noise than the baseline. 
Although baseline approaches use different criteria to filter noisy instances\,(Co-teaching with loss values; FINE with cosine similarities in the penultimate layer), we observe that ALASCA improves the quality of all subsets and is effective in training robust feature extractors.

\paragraph{Robust to Hyperparameter Selection.} Existing LNL methods have large performance differences depending on their hyperparameters, and, if the hyperparameter value is improperly selected, the performance is provably lower than that of standard training.
However, the value of the optimal hyperparameters depends on the network architecture and dataset. 
We combine ALASCA and existing LNL methods with various hyperparameter settings: (1) Standard with different weight decay factors; (2) ELR with different regularization coefficients; (3) Co-teaching along different warmup epochs; and (4) CRUST with different coreset sizes. 
The detailed experiment setup and results are in Appendix~\ref{app:4.2} and Figure~\ref{fig:hyperparam}, respectively.
While the baseline performances vary depending on the hyperparameters, performances with ALASCA are robust even with different hyperparameters.

\subsection{Ablation Studies}\label{sec:4.3}
%To get further intuition of ALASCA, we conduct an ablation study on each of the components in our method. 
To obtain further intuition on ALASCA, we conduct an ablation study on each component of our method.
%We design three experiments: (1) comparing existing methods in terms of performance and efficiency; (2) investigating performance tendency along the structure of the AC; and (3) component analysis to understand the influence of each component.
We design three experiments: (1) compare existing methods in terms of performance and efficiency; (2) investigate performance tendency along the structure of the AC; and (3) component analysis to understand the influence of each component.

\paragraph{Efficiency of ALASCA.} We compare our implicit regularization framework with the following regularization-based approaches: (1) HAR \cite{cao2021heteroskedastic}, which explicitly and adaptively regularizes the Jacobian matrix norm of data points in higher-uncertainty, lower-density regions more heavily and (2) CDR \cite{xia2020robust}, which identifies and regularizes non-critical parameters that tend to fit noisy labels and cannot generalize well. 
%These two methods are similar to our proposed method for regularizing the intermediate layer; however, they are explicit regularization methods that find noisy data or parameters which lead to a large amount of computational costs. 
These methods are similar to our proposed method for regularizing the intermediate layer, but are explicit regularization methods that are computationally expensive to find noisy data or parameters.
%Table~\ref{tab:regularize-methods} provides that our ALASCA consistently outperforms the competitive regularization-based methods over the various noise types as well as effective training in terms of computation memory and training time.
Table~\ref{tab:regularize-methods} shows that ALASCA consistently outperforms competing regularization-based methods for various noise types and provides efficient training in terms of computational memory and training time.

\paragraph{Effect of Auxiliary Classifiers.} We further compare the performance of ALASCA with different architectures (2 layers MLP, residual block;\,\citealt{he2016deep}, bottleneck;\,\citealt{howard2017mobilenets}) and positions of ACs. Table~\ref{tab:regularize-methods} shows how the performance of ALASCA changes according to the architecture and position of ACs. We observe that using several ACs effectively regularizes the intermediate layer, resulting in improved generalization. This result supports our motivations for using ACs to regularize intermediate layers and thereby enhance the robustness against label noise. 
Furthermore, the architecture of ACs does not affect performance much. 
Among the three architectures, bottleneck classifiers achieve competitive performance with the smallest additional computation costs. From these results, we apply the bottleneck block as the AC for all experiments.

\paragraph{Component Analysis.} Since our ALASCA is composed of three parts: (1) ALS; (2) ACs; and (3) EMA confidence, we perform a component analysis to understand which component is important for training robust feature extractors.
Our experiments are conducted on CIFAR-10 under various noise distributions. Table~\ref{tab:component} summarizes that each component is indeed effective, as the performance improves with each addition of a component. However, the most important factor for high performance is the combination of ALS and ACs, which enable effective LR on intermediate layers. Moreover, we verify that the performance of ALASCA improves using an ensemble of predictions from the main and ACs as we mentioned in Section~\ref{sec:3.2}.

\begin{table}[t]
\caption{Component analysis on each component of our proposed methods. ALASCA$^{*}$ denotes that result from ensemble of all classifiers during inference phase and the bold numbers indicate the best result.}
\centering
\scriptsize{
\begin{tabular}{l|cccc} \toprule
  & Symm. 20 & Symm. 50 & Asym. 40 & Inst. 40 \\ \midrule
ULS & 87.3 $\pm$ 0.5 & 69.5 $\pm$ 1.1 & 78.7 $\pm$ 0.5 & 59.2 $\pm$ 0.6 \\
+ AC & 91.1 $\pm$ 0.1 & 84.7 $\pm$ 0.2 & 87.2 $\pm$ 0.2 & 77.0 $\pm$ 1.0 \\ \midrule
ALS & 90.0 $\pm$ 0.3 & 82.0 $\pm$ 0.7 & 85.1 $\pm$ 0.2 & 74.8 $\pm$ 1.3 \\
+ AC & 91.8 $\pm$ 0.1 & 86.6 $\pm$ 0.2 & 90.0 $\pm$ 0.3 & 78.8 $\pm$ 0.6 \\
+ EMA & 90.1 $\pm$ 0.2 & 82.9 $\pm$ 0.8 & 86.9 $\pm$ 1.1 & 75.1 $\pm$ 2.1 \\ \midrule
ALASCA & 92.2 $\pm$ 0.2 & 88.0 $\pm$ 0.3 & 90.3 $\pm$ 0.3 & 81.4 $\pm$ 0.4 \\
ALASCA$^{*}$ & \textbf{92.8 $\pm$ 0.1} & \textbf{89.1 $\pm$ 0.4} & \textbf{90.6 $\pm$ 0.2} & \textbf{82.5 $\pm$ 0.9} \\
\bottomrule
\end{tabular}}\label{tab:component}
\end{table}

% \subsection{More Experimental Results.}
\subsection{Results on Real-world Datasets}\label{sec:4.4}
Clothing1M~\cite{xiao2015learning} contains one million clothing images obtained from online shopping websites with 14 classes and estimated noise level of 38.5\%~\cite{song2019does}. We apply ResNet50, which is widely used in previous studies \cite{liu2020early, kim2021comparing} on the Clothing1M dataset. Table~\ref{tab:clothing1m} compares ALASCA to the SOTA methods on the Clothing1M dataset. ALASCA achieves a competitive performance with the SOTA baseline methods although using only a single network with lower computational costs. Furthermore, we observe that ELR+ with ALASCA (A-ELR+) realizes a new SOTA performance and verify that our proposed method also works well on real-world datasets.
% While A-ELR+ chieves the best performance, ALASCA gets comparable performance although it utilizes only single network and lower computational costs. 
Additionally, we apply ALASCA on the (mini) WebVision dataset, a famous real-world dataset with label noise, and obtain similar results to those on Clothing1M. We report the detailed results in Appendix~\ref{sec:webvision}.

\section{Conclusion}
% ALASCA에 대한 설명
In this paper, we provide a theoretical analysis that LS encourages LR, and build upon the resulting insights to propose an effective and practical framework, ALASCA.
% Based on the resulting theoretical motivation, we combine ALS and auxiliary classifiers to efficiently enable adaptive LR.
Based on the resulting theoretical motivation, we combine ALS, AC, and EMA confidence to efficiently enable adaptive LR.
We experimentally show that ALASCA enhances the robustness of feature extractors and improves the performance of existing LNL methods on benchmark-simulated and real-world datasets.
In future work, we believe that our approach will arise interest in designing a novel regularization strategy for feature extractors.
% We hope that our contribution will lower the risk of label noise and develop robust models.

\section*{Acknowledgements}
This work was supported by Center for Applied Research in Artificial Intelligence (CARAI) grant funded by Defense Acquisition Program Administration (DAPA) and Agency for Defense Development (ADD) (UD190031RD).

% \clearpage
\bibliography{aaai23}

\clearpage
\appendix
\onecolumn
\begin{center}
    \textbf{\Large{Appendix:\\}}
    \textbf{\large{Robust Training with Adaptive Label Smoothing via Auxiliary Classifier under Label Noise}}
\end{center}

\section{Theoretical Analysis}
\subsection{Description for Remark~\ref{method:cao}}\label{appendix:thm1} 
In this section, we provide a detailed explanation of \citet{cao2021heteroskedastic}. We begin by defining the notations in \citet{cao2021heteroskedastic}. 
In this paper, the authors deal with a one-dimensional binary logistic regression problem, where the model can be expressed as
$$\text{Pr}(Y=y|X=x) = \frac{1}{1+\exp(-yf^*(x))},$$
where $\mathcal{X}=[0,1]$, $\mathcal{Y}=\{-1,1\}$, and $f^*$ is the ground-truth function. The objective function containing Lipschitz regularization is as follows:
$$\hat{f} = \text{argmin}_f \;\, \frac{1}{n} \sum_{i=1}^n \ell(f(x_i),y_i) + \lambda \int_0^1 \rho(x)(f'(x))^2 dx,$$
where $\ell$ is the cross-entropy loss and $\rho(x)$ is a data-dependent smoothing parameter for adaptive Lipschitz regularization. Let $q(x)$ be the density of $X$, and $I(x)$ be the fisher information matrix conditioned on $X=x$. Lastly, the mean squared error (MSE) of the test set $\{ (x_i, y_i) \}_{i=1}^n$ is defined as
$$MSE(\hat{f}):=\mathbb{E}_{\{(x_i,y_i)\}_{i=1}^n} \int_0^1 (\hat{f}(t)- f^*(t))^2 dt $$
Now, the following main theorem of \citet{cao2021heteroskedastic} provides an analytical formula for the asymptotic MSE on the test set.\\

\noindent
\textit{\textbf{Theorem 1. \cite{cao2021heteroskedastic}} Assume that $f^*,q,I \in W_2^2 = \{ f' \text{ is absolute continuous and } f'' \in L^2[0,1]\}$. Let $r(t)=-1/(q(t)I(t))$ and $L_0 = \int_{-\infty}^{\infty} \frac14 \exp(-2|t|)dt$. If we choose $\lambda = C_0 n^{-2/5}$ for some constant $C_0>0$, the asymptotic mean squared error is}
$$\lim_{n\rightarrow \infty} MSE(\hat{f}) = C_n \int_{0}^{1} \lambda^2r^2(t)\left [ \frac{\mathrm{d} }{\mathrm{d} t} (\rho(t)(f^*)'(t))\right ]^2 + L_0r(t)^{1/2}\rho(t)^{-1/2}dt$$
\textit{in probability, where $C_n$ is a scalar that only depends on $n$.}\\

Using this formula, they choose an optimal smoothing parameter $\rho(x)$ that minimizes the asymptotic MSE. 
However, not only do we not know $q(x)$ and $f^*$, but exact computation is unachievable in practice. Thus, the authors make the following two assumptions:
%Since we do not know $q(x)$ and $f^*$ and exact computation is unachievable in practice, the authors makes the following two assumptions: 
1) The data can be divided into $k$ intervals and $\rho(t)$, $q(t)$, and $I(t)$ is constant on each of the interval, 2) $\frac{\mathrm{d}^2 }{\mathrm{d} t^2} f^*(t)$ is close to a constant on the entire space. 
With these simplifications, the authors derive that the asymptotic MSE is minimized when $\rho \propto q^{3/5} I^{3/5}$. We obtain an approximate optimal choice of the smoothing parameter $\rho(x)$. Note that $I(x)= \text{Var}(Y|X=x)$ and $I(x)$ represents the label uncertainty of $x$.

\subsection{Theoretical Guarantees for Motivations}\label{appendix:thm2}
This section provides the detailed proof for Theorem~\ref{method:theorem1} and Theorem~\ref{method:theorem2}.
Before diving into the main proof, we provide a lemma showing twice-differentiability and convexity of the LS regularization term.

\begin{lemma}\label{lemma1}
    Define a function $\Phi := \Omega \circ \mathbf{g}$, i.e. $\Phi (\mathbf{z}) = \Omega(\mathbf{g}(\mathbf{z}))= \Omega(\mathbf{f})$. Then, $\Phi (\mathbf{z})$ is twice differentiable and also convex. 
\end{lemma}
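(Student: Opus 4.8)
The plan is to reduce the claim to two standard facts about the log-sum-exp function, together with the observation that affine pre-composition and addition of an affine term preserve both smoothness and convexity. First I would make $\Phi$ explicit. Since $\mathbf{g}(\mathbf{z}) = \mathbf{W}^\intercal\mathbf{z}$ and, regarding $\Omega$ as the map $\mathbf{u}\mapsto L\log\bigl(\sum_{i=1}^L e^{u_i}\bigr) - \sum_{i=1}^L u_i$ on $\mathbb{R}^L$, we have
\begin{equation*}
\Phi(\mathbf{z}) = L\log\Bigl(\sum_{i=1}^L e^{\mathbf{W}_i^\intercal\mathbf{z}}\Bigr) - \Bigl(\sum_{i=1}^L \mathbf{W}_i\Bigr)^{\!\intercal}\mathbf{z} = L\cdot\psi(\mathbf{W}^\intercal\mathbf{z}) - \mathbf{c}^\intercal\mathbf{z},
\end{equation*}
where $\psi(\mathbf{u}) := \log\sum_{i} e^{u_i}$ is the log-sum-exp map and $\mathbf{c} := \sum_i \mathbf{W}_i$.

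For twice-differentiability, I would note that $\mathbf{u}\mapsto\sum_i e^{u_i}$ is $C^\infty$ and strictly positive on all of $\mathbb{R}^L$, so the outer $\log$ is only ever evaluated on $(0,\infty)$, where it is smooth; hence $\psi$ is $C^\infty$, and composing with the linear map $\mathbf{z}\mapsto\mathbf{W}^\intercal\mathbf{z}$ and subtracting the linear term $\mathbf{c}^\intercal\mathbf{z}$ leaves it $C^\infty$, in particular $C^2$. For convexity I would either invoke the standard fact that $\psi$ is convex and conclude immediately (multiplication by $L>0$, pre-composition with an affine map, and subtraction of a linear functional all preserve convexity), or---to keep the argument self-contained and because the Hessian is reused in Theorem~\ref{method:theorem1}---compute directly. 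Writing $\mathbf{p}(\mathbf{z}) := \softmax(\mathbf{W}^\intercal\mathbf{z})$, the chain rule gives $\nabla\Phi(\mathbf{z}) = L\,\mathbf{W}\mathbf{p}(\mathbf{z}) - \mathbf{c}$ and
\begin{equation*}
\nabla^2\Phi(\mathbf{z}) = L\,\mathbf{W}\bigl(\operatorname{diag}(\mathbf{p}) - \mathbf{p}\mathbf{p}^\intercal\bigr)\mathbf{W}^\intercal,
\end{equation*}
the affine term contributing nothing to the Hessian.

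Finally I would check $\nabla^2\Phi(\mathbf{z})\succeq0$: for an arbitrary $\mathbf{v}\in\mathbb{R}^Q$, set $\mathbf{a} := \mathbf{W}^\intercal\mathbf{v}\in\mathbb{R}^L$, so that
\begin{equation*}
\mathbf{v}^\intercal\nabla^2\Phi(\mathbf{z})\,\mathbf{v} = L\Bigl(\sum_i p_i a_i^2 - \bigl(\textstyle\sum_i p_i a_i\bigr)^2\Bigr)\ge0,
\end{equation*}
because $p_i\ge0$, $\sum_i p_i = 1$, and the bracketed quantity is the variance of $\mathbf{a}$ under the probability vector $\mathbf{p}$ (equivalently, apply Cauchy--Schwarz or Jensen). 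Hence the Hessian is positive semidefinite everywhere, so $\Phi$ is convex. There is essentially no substantial obstacle here: the only steps requiring a line of care are verifying $\operatorname{diag}(\mathbf{p}) - \mathbf{p}\mathbf{p}^\intercal\succeq0$ via the variance/Cauchy--Schwarz computation, and noting that the $-\sum_i u_i$ term, being affine, breaks neither smoothness nor convexity. I would present the explicit-Hessian route, since the formula for $\nabla^2\Phi$ feeds directly into bounding $\lVert\mathbf{J_f}\rVert_F$ in the proof of Theorem~\ref{method:theorem1}.
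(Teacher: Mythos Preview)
Your proposal is correct and follows essentially the same route as the paper: compute the gradient and Hessian of $\Phi$ explicitly and verify positive semidefiniteness via Cauchy--Schwarz (equivalently, your variance argument). Your presentation is a bit cleaner---you isolate the log-sum-exp structure and use the softmax vector $\mathbf{p}$, whereas the paper leaves the exponentials unnormalized---but the substance is identical. One minor remark: in the paper, it is the \emph{gradient} formula (not the Hessian) that is reused in Theorem~\ref{method:theorem1} to locate the minimizer, so your closing motivation is slightly off, though this does not affect the proof.
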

\begin{proof}
Let $Q$ denote the dimension of $\mathbf{z}$, i.e. $\mathbf{z} \in \mathbb{R}^{Q}$. That is, $\Phi$ is a function from $\mathbb{R}^{Q}$ to $\mathbb{R}$. The gradient and Hessian of $\Phi(\mathbf{z})$ is as follows:
\begin{equation}
\begin{split}
    \nabla \Phi(\mathbf{z})= \frac{\partial \Phi(\mathbf{z})}{\partial \mathbf{z}} &= \frac{\partial}{\partial \mathbf{z}} \left[ L \cdot \log \left[ \sum_{i=1}^{L} e^{\left< \mathbf{W}_{i}, \mathbf{z} \right>} \right] - \sum_{i=1}^{L} \left< \mathbf{W}_{i}, \mathbf{z} \right> \right] \\
    &= L \cdot \frac{\sum_{i=1}^{L} e^{\left< \mathbf{W}_{i}, \mathbf{z} \right>} \, \mathbf{W}_{i}}{\sum_{i=1}^{L} e^{\left< \mathbf{W}_{i}, \mathbf{z} \right>}} - \sum_{i=1}^{L} \mathbf{W}_{i},
\end{split}
\end{equation}
\begin{equation}
\begin{split}
    \nabla^2 \Phi(\mathbf{z}) 
    &= L \cdot \frac{\sum_{i=1}^{L} e^{\left< \mathbf{W}_{i}, \mathbf{z} \right>} \, \mathbf{W}_{i} \mathbf{W}_{i}^{\intercal} \cdot \sum_{i=1}^{L} e^{\left< \mathbf{W}_{i}, \mathbf{z} \right>} - \left ( \sum_{i=1}^{L} e^{\left< \mathbf{W}_{i}, \mathbf{z} \right>} \, \mathbf{W}_{i}\right )\left ( \sum_{i=1}^{L} e^{\left< \mathbf{W}_{i}, \mathbf{z} \right>} \, \mathbf{W}_{i}^{\intercal}\right) }{\left(\sum_{i=1}^{L} e^{\left< \mathbf{W}_{i}, \mathbf{z} \right>}\right)^2}.
\end{split}
\end{equation}
Now, let's show that $\nabla^2 \Phi(\mathbf{z})$ is positive semi-definite for all $z\in\mathbb{R}^{Q}$. For any $\mathbf{v} \in\mathbb{R}^{Q}$,
\begin{equation}
\begin{split}
     & \qquad \mathbf{v}^{\intercal}  \left( \sum_{i=1}^{L} e^{\left< \mathbf{W}_{i}, \mathbf{z} \right>} \, \mathbf{W}_{i} \mathbf{W}_{i}^{\intercal} \cdot \sum_{i=1}^{L} e^{\left< \mathbf{W}_{i}, \mathbf{z} \right>} - \left ( \sum_{i=1}^{L} e^{\left< \mathbf{W}_{i}, \mathbf{z} \right>} \, \mathbf{W}_{i}\right )\left ( \sum_{i=1}^{L} e^{\left< \mathbf{W}_{i}, \mathbf{z} \right>} \, \mathbf{W}_{i}^{\intercal}\right)\right) \mathbf{v} \\
    & = \sum_{i=1}^{L} e^{\left< \mathbf{W}_{i}, \mathbf{z} \right>} (\mathbf{W}_{i}^{\intercal} \mathbf{v} )^2 \cdot \sum_{i=1}^{L} e^{\left< \mathbf{W}_{i}, \mathbf{z} \right>} - \left ( \sum_{i=1}^{L} e^{\left< \mathbf{W}_{i}, \mathbf{z} \right>} \, \mathbf{W}_{i}^{\intercal} \mathbf{v} \right)^2 \geq 0
\end{split}
\end{equation}
by Cauchy-Schwarz Inequality. Thus, $\mathbf{v}^{\intercal} \nabla^2 \Phi(\mathbf{z}) \mathbf{v}\geq 0$ for all $\mathbf{v} \in\mathbb{R}^{Q}$ and this proves that $\nabla^2 \Phi(\mathbf{z})$ is positive semi-definite for all $\mathbf{z}\in\mathbb{R}^{Q}$ and $\Phi (\mathbf{z})$ is convex.
\end{proof}

\noindent\textbf{Theorem \ref{method:theorem1}.} $\mathbf{h}=\mathbf{0}$ is the minimizer of $\Omega \circ \mathbf{g}$. \, If Assumption \ref{thm:assumption1} holds, $\mathbf{h}=\mathbf{0}$ is the unique minimizer.
\begin{proof}
We specify the domain and codomain of each function: $\mathbf{g}: \mathbb{R}^{Q} \rightarrow \mathbb{R}^{L}, \mathbf{h}: \mathcal{X} \rightarrow \mathbb{R}^{Q}$ where $\mathcal{X} \subset \mathbb{R}^{D}$. Then,
\begin{equation}
    \left.\frac{\partial \Omega(\mathbf{f})}{\partial \mathbf{z}} \right\vert_{\mathbf{z}=\mathbf{0}} = \left. L \cdot \frac{\sum_{i=1}^{L} e^{\left< \mathbf{W}_{i}, \mathbf{z} \right>} \, \mathbf{W}_{i}}{\sum_{i=1}^{L} e^{\left< \mathbf{W}_{i}, \mathbf{z} \right>}} - \sum_{i=1}^{L} \mathbf{W}_{i} \right\vert_{\mathbf{z}=\mathbf{0}} = \mathbf{0}.
\end{equation}
Since $\Omega(\mathbf{f})$ is convex respect to $\mathbf{z}$ by Lemma \ref{lemma1}, $\mathbf{z}=\mathbf{h}(\mathbf{x})=\mathbf{0}$ is the global minimizer of $\Omega(\mathbf{f})$. Now, by Assumption \ref{thm:assumption1}, $\mathbf{W}_2-\mathbf{W}_1, \mathbf{W}_3-\mathbf{W}_1, \cdots, \mathbf{W}_L-\mathbf{W}_1$ are linearly independent and is a basis for $\mathbb{R}^{Q}$. We express $\frac{\partial \Omega(\mathbf{f})}{\partial \mathbf{z}}$ with the basis as follows:
\begin{equation}
\begin{split}
    \frac{\partial \Omega(\mathbf{f})}{\partial \mathbf{z}} &= L \cdot \frac{\sum_{i=1}^{L} e^{\left< \mathbf{W}_{i}, \mathbf{z} \right>} \, \mathbf{W}_{i}}{\sum_{i=1}^{L} e^{\left< \mathbf{W}_{i}, \mathbf{z} \right>}} - \sum_{i=1}^{L} \mathbf{W}_{i} \\
    &= L \cdot \frac{\sum_{i=1}^{L} e^{\left< \mathbf{W}_{i} - \mathbf{W}_{1}, \mathbf{z} \right>} \, \mathbf{W}_{i}}{\sum_{i=1}^{L} e^{\left< \mathbf{W}_{i} - \mathbf{W}_{1}, \mathbf{z} \right>}} - \sum_{i=1}^{L} \mathbf{W}_{i} \\
    & = L \cdot \frac{\sum_{i=1}^{L} e^{\left< \mathbf{W}_{i} - \mathbf{W}_{1}, \mathbf{z} \right>} \, (\mathbf{W}_{i} - \mathbf{W}_{1})}{\sum_{i=1}^{L} e^{\left< \mathbf{W}_{i} - \mathbf{W}_{1}, \mathbf{z} \right>}} - \sum_{i=1}^{L} (\mathbf{W}_{i} - \mathbf{W}_{1}).
\end{split}
\end{equation}
Since $\{\mathbf{W}_{j} - \mathbf{W}_{1}: 2 \leq j \leq L \}$ forms a basis, we have 
\begin{equation}\label{eq10}
\begin{split}
    \frac{\partial \Omega(\mathbf{f})}{\partial \mathbf{z}} = \mathbf{0} & \iff L \cdot \frac{ e^{\left< \mathbf{W}_{j} - \mathbf{W}_{1}, \mathbf{z} \right>} }{\sum_{i=1}^{L} e^{\left< \mathbf{W}_{i} - \mathbf{W}_{1}, \mathbf{z} \right>}} - 1 = 0 \quad \text{ for all } \;1\leq j \leq L \\
    & \iff \left< \mathbf{W}_{j} - \mathbf{W}_{1}, \mathbf{z} \right> = 0 \quad \text{ for all } \;2\leq j \leq L.
\end{split}
\end{equation}
We find the optimal value $\mathbf{z}$ which minimizes $\Omega(\mathbf{f})$ by solving the $\frac{\partial \Omega(\mathbf{f})}{\partial \mathbf{z}} = \mathbf{0}$. By Equation \ref{eq10}, we have
\begin{equation}
   \begin{pmatrix}
    \mathbf{W}_{2} - \mathbf{W}_{1}\\ 
    \mathbf{W}_{3} - \mathbf{W}_{1}\\ 
    \vdots \\ 
    \mathbf{W}_{L} - \mathbf{W}_{1}
    \end{pmatrix}^{\intercal} \mathbf{z} = \mathbf{0}.
\end{equation}
Again by Assumption \ref{thm:assumption1}, the left multiplied matrix is a full rank square matrix. Hence, we obtain $\frac{\partial \Omega(\mathbf{f})}{\partial \mathbf{z}} = \mathbf{0} \Leftrightarrow \mathbf{z} = \mathbf{0}$, which implies that $\mathbf{z}=\mathbf{h}(\mathbf{x})=\mathbf{0}$ is the unique minimizer of $\Omega(\mathbf{f})$.
\end{proof}

\noindent\textbf{Theorem \ref{method:theorem2}.} Consider a bounded feature space $\mathcal{X}$ and suppose that Assumption \ref{thm:assumption2} is satisfied. If $\mathbf{h}(\mathbf{x}_n)=\mathbf{0}$ for $1\leq n \leq N$, $\left \|  \mathbf{J_f}(\mathbf{x}_n)\right \|_F \rightarrow \mathbf{0}$ as $N \rightarrow \infty$ holds for $1\leq n \leq N$.
\begin{proof}
For all $1 \leq n \leq N$ and $1 \leq q \leq Q$, because we have $\mathbf{h}(\mathbf{x}_n) = \mathbf{h}(\mathbf{x}_{n+1}) = \mathbf{0}, \;\exists\; \mathbf{c}_n^q \in \overline{\mathbf{x}_n \mathbf{x}_{n+1}} \;\,s.t. \;\nabla h_q(\mathbf{c}_n^q) = \mathbf{0}$ by the Mean Value Theorem. Then, using Assumption \ref{thm:assumption2}, we obtain
\begin{equation}
    \left \|\nabla h_q(\mathbf{x}_{n}) \right \| \leq L_h \left \| \mathbf{x}_{n} - \mathbf{c}_n^q \right \| \leq L_h \left \| \mathbf{x}_{n} - \mathbf{x}_{n+1} \right \|.
\end{equation}
Therefore, 
\begin{equation}
    \left \|  \mathbf{J_h}(\mathbf{x}_n)\right \|_F = \sqrt{\sum_{q=1}^{Q} \left \|\nabla h_q(\mathbf{x}_{n}) \right \|^2} \leq L_h \sqrt{Q} \left \| \mathbf{x}_{n} - \mathbf{x}_{n+1} \right \|
\end{equation}
and for sufficiently large $N$ and any $\epsilon>0$, we can have the training set satisfy $\left \| \mathbf{x}_{n} - \mathbf{x}_{n+1} \right \| \leq \frac{\epsilon}{L_h \sqrt{Q}}$ since $\mathcal{X}$ is bounded. This leads to 
\begin{equation}
    \left \|  \mathbf{J_h}(\mathbf{x}_n)\right \|_F \leq \epsilon
\end{equation}
which implies that $\left \|  \mathbf{J_h}(\mathbf{x}_n)\right \|_F \rightarrow \mathbf{0}$ as $N$ goes to $\infty$. Finally, since $\mathbf{f}$ is a composite function of $\mathbf{g}$ and $\mathbf{h}$, $\left \|  \mathbf{J_f}(\mathbf{x}_n)\right \|_F \rightarrow \mathbf{0}$ as $N \rightarrow \infty$ also holds. Hence, we may conclude that the label smoothing regularizer encourages Lipschitz regularization of deep neural network functions.
\end{proof}

\section{Full Version of Related Works}\label{app:related}
Numerous studies have been conducted on classification tasks with noisy labels.
Although we cannot explain all related works in the main paper due to page limitations, here we fully describe existing related works on LNL methods.

\paragraph{Noise-Cleansing base Approaches.} 
%\textcolor{red}{Existing methods are included this category as identify and filter our the noisy information and only exploit the clean data.}
%Existing methods included in this category identify and filter the noisy information and only exploit the clean data.
Noise-Cleansing methods identify and filter out the noisy information and only exploit the clean data.
%\citet{malach2017decoupling} suggested a decoupling strategy that simultaneously trains two networks and then only updates models using instances with differing predictions from the two networks.
\citet{malach2017decoupling} proposed a decoupling strategy that trains two networks simultaneously and updates the model only using the instances that have different predictions from the two networks.
%Co-teaching \cite{han2018co}, which employs two networks, exploits subsets of instances with small losses from each network, and trains each network with subsets of instances from another network. 
Co-teaching \cite{han2018co}, which employs two networks, exploits subsets of low-loss instances in each network, and trains each network using the subsets of instances from the other network. 
%\citet{mirzasoleiman2020coresets} presented an approach that samples subsets of clean cases that produce an essentially low-rank Jacobian matrix and demonstrated that gradient descent applied to the selections prevents overfitting to noisy labels.
\citet{mirzasoleiman2020coresets} presented an approach that samples subsets of clean cases that produce an essentially low-rank Jacobian matrix and demonstrated that gradient descent applied to the selections prevents overfitting noisy labels.
\citet{kim2021fine} proposed a detecting framework, termed FINE which utilizes the high-order topological information of data in latent space by using eigen decomposition of their covariance matrix.
%Recently, \citet{li2020dividemix} modeled the per-sample loss distribution and divided it into a labeled set with clean samples and an unlabeled set with noisy samples, and they leverage the noisy samples through the well-known semi-supervised learning technique MixMatch \cite{berthelot2019mixmatch}.
Recently, \citet{li2020dividemix} modeled the per-sample loss distribution and divided the samples into a labeled set containing clean samples and an unlabeled set with noisy samples, and apply MixMatch \cite{berthelot2019mixmatch}, which is one of the most famous semi-supervised methods.

\paragraph{Robust Loss Functions.} Some methods focus on providing loss functions that are stable and can be shown to be tolerant of label noise. \citet{ghosh2017robust} theoretically proved that the mean absolute error (MAE) might be robust against noisy labels while commonly used CE loss is not. \citet{zhang2018generalized} argued that MAE performed poorly with DNNs and proposed a GCE loss function, which can be seen as a generalization of MAE and CE. \citet{wang2019symmetric} introduced the reverse version of the cross-entropy term (RCE) and suggested the SCE loss function as a weighted sum of CE and RCE. \citet{ma2020normalized} showed that a normalized version of any loss would be robust to noisy labels. Recently, \citet{zhou2021asymmetric} argued that the symmetric condition of existing robust loss functions is overly restrictive and proposed asymmetric loss function families which alleviate symmetric conditions while maintaining noise-tolerant of noise robust loss function.

\paragraph{Robust Architecture.} Architectural changes in a certain network may mitigate the negative effect of noisy examples. Building a new network with noisy adaptation layers is a possible way to increase performance. Generally, a noise adaptation layer is located after the softmax layer and is not utilized in the inference phase. \citet{chen2015webly} utilized the concept of curriculum learning. They build a confusion matrix with small loss examples and initialize the weight parameters of the noise adaption layer with the confusion matrix. EM algorithm was implemented in \citet{bekker2016training} estimating true labels in E-step and updating parameters with back-propagation in M-step.
\section{Implementation Setup for Experimental Results}
All experiments on CIFAR used a single Nvidia GeForce 2080Ti GPU. For real-world dataset experiments (Clothing1M, WebVision), we apply a single NVIDIA A100 40GB for all individual experiments. We apply normalization and simple data augmentation techniques (random crop and horizontal flip) to the training sets of all datasets. We additionally describe the architecture of bottleneck network \cite{howard2017mobilenets} that we use as ACs in Figure~\ref{fig:bottleneck}. We attach this AC between the repetitive blocks of all backbone networks.

\begin{figure}[ht]
    \centering
    \includegraphics[width=0.5\linewidth]{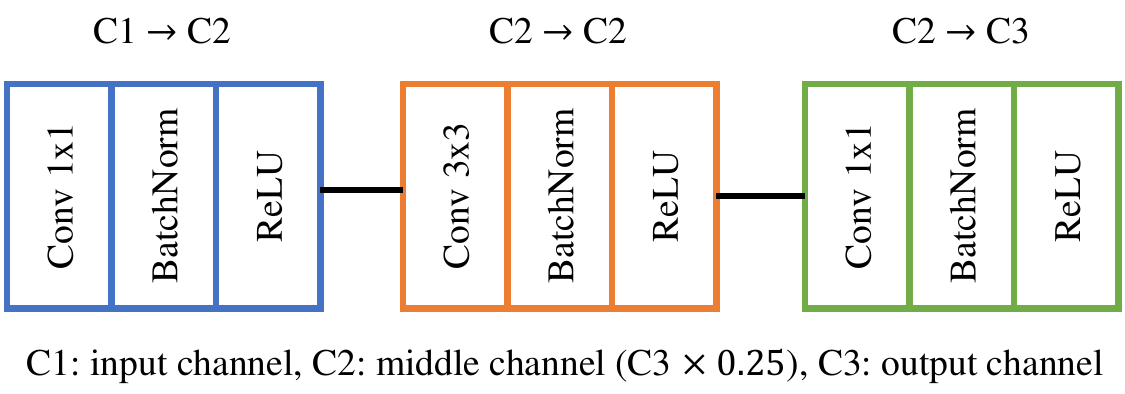}
    \caption{Architecture of bottleneck networks for ACs in our ALASCA}
    \label{fig:bottleneck}
\end{figure}

\subsection{Dataset Generation for Synthetic Datasets}\label{app:4.0}
\paragraph{Noisy CIFAR-10/-100.} By following \citet{kim2021fine}, we synthetically generate the symmetric noise, asymmetric noise. We inject uniform randomness into a fraction of labels for symmetric noise. Furthermore, we generate asymmetric noise by mapping TRUCK\,$\rightarrow$\,AUTOMOBILE, BIRD\,$\rightarrow$\,AIRPLANE, DEER\,$\rightarrow$\,HORSE, CAT\,$\rightarrow$\,DOG for CIFAR-10. For CIFAR-100, we divide the entire dataset into 20 super-classes of size five and generate asymmetric noise by changing each class to the next class within each super-class. Furthermore, based on \citet{cheng2020learning}, we generate instance-based noise and compare the performance with existing LNL methods. With the defined noise rate (the global flipping rate) $\epsilon$, we first sample their flip rates from a truncated normal distribution $N(\epsilon, 0.1^{2}, [0, 1])$, where $[0, 1]$ indicates the range of the truncated normal distribution. Then, we sample parameters $W$ from the standard normal distribution for generating instance-dependent label noise. The size of $W$ is $S \times K$, where $S$ denotes the length of each feature. 

\paragraph{Clothing1M.} Clothing1M contains one million clothing images obtained from online shopping websites with 14 classes. 
The dataset offers 50k, 14k, and 10k clean data points that have been confirmed for use in training, validation, and testing. We employ a randomly picked pseudo-balanced subset with 120k images as a training set rather than these 50k clean training data. On the 10k clean testing dataset, we compute the classification accuracy for evaluation. For A-Coteaching, we set the noise ratio of 38.5\% by following \citet{song2019does}. For each epoch, we sample 2000 mini-batches from the training data ensuring that the classes of the noisy labels are balanced.

\subsection{Experimental Setup for Section~\ref{sec:4.1} and \ref{sec:4.4}}\label{app:4.1}
We use the architecture and hyperparameter settings for all baseline experiments following the setup of \citet{liu2020early} and \citet{kim2021fine}, except for the SSL approaches. For SSL approaches, we follow the original works \cite{li2020dividemix, liu2020early}. As we mentioned, we apply $\beta, \tau$ and $\lambda$ as 0.7, 1/3, and 2.0. We further conduct sensitivity analysis of hyperparameters in our method in Appendix~\ref{sec:sensitivity}, showing that our method is reasonably robust to hyperparameter settings.

\paragraph{Noise-Robust Loss Functions.} We conduct experiments with CE, GCE, SCE, and ELR as mentioned in Section~\ref{sec:4.1}. We follow all experiment settings presented in the \citet{liu2020early}. We use ResNet34 models and trained them using a standard Pytorch SGD optimizer with momentum of 0.9. We set a batch size of 128 for all experiments. We utilize weight decay of 0.001 and set the initial learning rate as 0.02, and reduce it by a factor of 100 after 40 and 80 epochs for CIFAR-10 (total 120 epochs) and after 80 and 120 epochs for CIFAR-100 (total 150 epochs).

\paragraph{Sample-Selection Methods.} In the experiments, we follow all experiment settings presented in the \citet{kim2021fine}. We use ResNet34 (or as the backbone with ALASCA) models and trained them using a standard Pytorch SGD optimizer with an initial learning rate of 0.02 and a momentum of 0.9. We set a batch size of 128 for all experiments and utilize a  weight decay of 0.001. Unlike the noise-robust loss functions, we trained all baseline with our settings and reported the results. For Co-teaching and FINE, we set the warm-up epochs as 30 and 40, respectively. For CRUST, we set the coreset size as 0.5 and do not apply Mixup \cite{zhang2017mixup} by following \citet{kim2021fine}.

\paragraph{Clothing1M.} For backbone network, we apply ResNet-50 pretrained on ImageNet dataset by following previous works \cite{liu2020early, kim2021comparing}. The model is trained with batch size 64 and initial learning rate $1\times10^{-3}$, and reduce it by a factor of 100 after 7 epochs (total 15 epochs). We use Pytorch SGD optimizer with weight decay of 0.001 and momentum of 0.9.

\subsection{Experimental Setup Section~\ref{sec:4.2}}\label{app:4.2}
In this section, we fully describe the experimental settings which contain noise levels, backbone network architectures, hyperparameter for the corresponding method, and hyperparameter values for Figure~\ref{fig:hyperparam}. The remaining experimental settings and hyperparameter values for our method are the same as Section~\ref{app:4.1}.

\paragraph{Standard Training.} \citet{yu2019does} argued that LS denoises label noise by encouraging weight-shrinkage of DNNs. Motivated by this view, we find that the performance of the network depends on the weight decay factor for training. We train the network with weight decay factor of $1\times10^{-4}$, $5\times10^{-4}$, and $1\times10^{-3}$. We apply VGG19 as backbone networks on CIFAR-10 with 60\% of symmetric noise.

\paragraph{ELR.} ELR \cite{liu2020early} uses two types of hyperparameters: the temporal ensembling parameter and the regularization coefficient which are denoted as $w_{1}$ and $w_{2}$ in Table~\ref{tab:hp-setup}, respectively. The original work of ELR performed hyperparameter tuning on the CIFAR datasets via grid search. The selected values are $w_{1}=0.7$ and $w_{2}=3.0$ for CIFAR-10 with symmetric noise, $w_{1}=0.9$ and $w_{2}=1$ for CIFAR-10 with asymmetric noise, and $w_{1}=0.9$ and $w_{2}=7$ for CIFAR-100. We train the network on CIFAR-10 under 40\% of asymmetric noise with the above three hyperparameter settings. We apply PreAct-ResNet18 as backbone networks on CIFAR-10 with 40\% of asymmetric noise.

\paragraph{Co-teaching.} The warm-up epochs for Co-teaching \cite{han2018co} are very important. If the number of warm-up epochs is too small, the network starts to filter the clean examples with underfitted network parameters. Moreover, the network is overfitted to noisy examples if the number of warm-up epochs is too large. While the original work reported the warm-up epochs as 30, here, we train the network with warm-up epochs 10, 20, and 30. We apply ResNet32 as backbone networks on CIFAR-10 with 80\% of symmetric noise.

\paragraph{CRUST.} The coreset size is important to CRUST since the network risks overfitting to noisy instances with large coresets and underfitting to clean examples with small coresets. The original paper of CRUST \cite{mirzasoleiman2020coresets} selected the coreset size as 50\% of the dataset size. In our experiments, we train the network with corset sizes of 30\%, 50\%, and 70\% of the dataset size. We apply WideResNet28-4 as backbone networks on CIFAR-10 with 20\% of symmetric noise.

% The coreset size is important to CRUST since the network risks overfitting to noisy instances with large coreset and underfitting to clean examples with small coreset. The original paper of CRUST \cite{mirzasoleiman2020coresets} selected the coresets size as 50\% of the dataset size. In our experiments, we train the network with corset size as 30\%, 50\%, 70\% of the dataset size. We apply WideResNet28-4 as backbone networks on CIFAR-10 with 20\% of symmetric noise.
\begin{table}[ht]
\caption{Description of noise rate, backbone network architecture, hyperparameter, and parameter values for each method (Standard, ELR, Co-teaching, CRUST) of experiments in Section~\ref{sec:4.2}.} \label{tab:hp-setup}
\centering
\scriptsize{
\begin{tabular}{l|cccc} \toprule
 & Standard & ELR & Co-teaching & CRUST \\ \midrule
Noise Rate & Symm. 60\% & Asym. 40\% & Symm. 80\% & Symm. 20\% \\
Network & VGG19 & PreAct-ResNet18 & ResNet32 & WideResNet28-4 \\
Hyperparameter & Weight decay & $w_{1}, w_{2}$ & Warm-up epochs & Coreset size \\
\multirow{3}{*}{Values} & $1\times10^{-4}$ (S1) & 0.7, 3.0 (S1) & 10 (S1) & 30\% (S1) \\
                        & $5\times10^{-4}$ (S2) & 0.9, 1.0 (S2) & 20 (S2) & 50\% (S2) \\
                        & $1\times10^{-3}$ (S3) & 0.9, 7.0 (S3) & 30 (S3) & 70\% (S3) \\
\bottomrule
\end{tabular}}
\end{table}

\section{Additional Analysis on the Experiments}

\subsection{Combination with Semi-supervised Approaches}\label{sec:semi}
%\textcolor{blue}{SSL approaches \cite{li2020dividemix, liu2020early} divide the training data into the clean and noisy as labeled as unlabeled examples, respectively, and use both of them in semi-supervised manners.}
SSL approaches \cite{li2020dividemix, liu2020early} fully utilize the label information of clean examples while they correct the information of noisy examples by regarding them as unlabeled data.
Recently, methods belonging to this category have shown the best performance among the various LNL methods, and these methods possibly train robust networks for even extremely high noise rates. However, pseudo labels generated from the corrupted networks might lead the network to sub-optimal.
Since most of the SSL approaches use Mixup \cite{zhang2017mixup} augmentation, we propose pseudo code for ALASCA with Mixup in Algorithm~\ref{alg:alasca_mixup}. We combine the ALASCA with existing semi-supervised approaches and compare the performance with baselines. All experimental settings are the same as the original works, and the hyperparameter values of ALASCA are the same with Appendix~\ref{app:4.1}.

\begin{algorithm}[ht]
\caption{ALASCA (Mixup version)}\label{alg:alasca_mixup}
\textbf{Require}: $\left\{ \mathbf{x}_{i}, \mathbf{y}_{i} \right\}$, $1 \leq i \leq N$ \\
\textbf{Require}: $\mathcal{L}$ \Comment{Loss function for existing LNL methods}\\
\textbf{Require}: $\left\{ \Theta_{k} \right\}_{k=0}^{K}$ \Comment{Parameters for main classifier and ACs} \\
\textbf{Require}: $\beta$, $\tau$ \Comment{EMA weight and temperature of ALASCA} \\
\textbf{Require}: $\lambda$ \Comment{Coefficient for power of regularization} \\
\textbf{Require}: $\gamma$ \Comment{Hyperparameter of Beta distribution for sampling Mixup weight} \\
\textbf{Output}: $\Theta_{0}$
\begin{algorithmic}[1] %[1] enables line numbers
\State $\textbf{t}_{i} \leftarrow \mathbf{0}_{\text{N}}$. \Comment{Initialize EMA confidence}
\For{each minibatch B}
\State $\textbf{t}_{i} \leftarrow \beta \textbf{t}_{i} + (1-\beta) \mathbf{f}_{\Theta_{0}}(\mathbf{x}_{i})$ \Comment{EMA (Averaging)}
\State $\tilde{\alpha}(\mathbf{x}_{i}) \leftarrow 1 - \mathcal{S}(\mathbf{t}_{i} / \tau)$ \Comment{EMA (Sharpening)}

\State $\zeta \sim Beta(\gamma, \gamma)$ \Comment{Sampling Mixup weight}
\State $\tilde{\mathbf{x}}_{i} = \zeta \mathbf{x}_{i} + (1 - \zeta) \mathbf{x}_{j}$\;\;$\mathbf{x}_{i}, \mathbf{x}_{j} \in$ B \Comment{Mix the inputs}
\State $\tilde{\mathbf{y}}_{i} = \zeta \mathbf{y}_{i} + (1 - \zeta) \mathbf{y}_{j}$\;\;$\mathbf{y}_{i}, \mathbf{y}_{j} \in$ B \Comment{Mix the targets}
\State loss $\leftarrow$ $-\frac{1}{\vert B \vert} \sum_{i=1}^{\vert B \vert} \mathcal{L} \left( \mathbf{f}_{\Theta_{0}}(\tilde{\mathbf{x}}_{i}), \tilde{\mathbf{y}}_{i} \right)$
\State \textcolor{white}{loss $\leftarrow$} $+\frac{\lambda}{\vert B \vert} \sum_{k=1}^{K} \sum_{(i, j)}^{\vert B \vert} \zeta \cdot \ell^{ALS}_{\tilde{\alpha}(\mathbf{x}_{i})} \left( \mathbf{f}_{\Theta_{k}}(\tilde{\mathbf{x}}_{i}), \mathbf{y}_{i} \right) + (1 - \zeta) \cdot \ell^{ALS}_{\tilde{\alpha}(\mathbf{x}_{j})} \left( \mathbf{f}_{\Theta_{k}}(\tilde{\mathbf{x}}_{i}), \mathbf{y}_{j} \right)$
\State \textcolor{gray}{/* Compute loss by using Eq. \eqref{eq:als} */}
\State update $\Theta_{0}$ and $\left\{ \Theta_{k} \right\}_{k=1}^{K}$ using SGD
\EndFor
\end{algorithmic}
\end{algorithm}

\begin{table*}[ht]
\centering
\caption{Test accuracies (\%) on CIFAR-10 and CIFAR-100 under different noisy types and fractions for noise-robust loss approaches. The results of all baseline methods were taken from original works\,\cite{li2020dividemix, liu2020early}. By following previous works, we report the best and last test accuracy sequentially for each experiment. The best results sharing the noisy fraction and method are highlighted in bold.}
\scriptsize{ %\tiny scriptsize
\begin{tabular}{l|ccccc|ccccc} \toprule
Dataset      & \multicolumn{5}{c}{CIFAR-10}     & \multicolumn{5}{|c}{CIFAR-100}       \\ \midrule
Noisy Type   & \multicolumn{4}{c}{Sym.} & Asym. & \multicolumn{4}{c}{Sym.} & Asym. \\ \midrule
Noise Ratio  & 20\% & 50\% & 80\% & 90\% & 40\% & 20\% & 50\% & 80\% & 90\% & 40\% \\ \midrule \midrule
DivideMix  & 96.1\,/\,95.7 & 94.6\,/\,94.4 & 93.2\,/\,92.9 & 76.0\,/\,75.4 & 93.4\,/\,92.1 & 77.3\,/\,76.9 & 74.6\,/\,74.2 & 60.2\,/\,59.6 & 31.5\,/\,31.0 & 72.1\,/\,- \\
\; \textbf{+ ALASCA} & \textbf{96.3\,/\,95.9} & \textbf{95.3\,/\,94.8} & \textbf{93.5\,/\,93.0} & \textbf{78.7\,/\,78.2} & \textbf{93.5\,/\,92.9} & \textbf{77.4\,/\,77.2} & \textbf{75.0\,/\,74.5} & \textbf{60.9\,/\,59.8} & \textbf{32.3\,/\,32.0} & \textbf{72.2\,/\,71.5} \\ \midrule
ELR+  & 95.8\,/\,94.6 & 94.8\,/\,93.8 & 93.3\,/\,91.1 & 78.7\,/\,75.2 & 93.0\,/\,92.7 & 77.6\,/\,77.5 & 73.6\,/\,72.4 & 60.8\,/\,58.2 & 33.4\,/\,30.8 & 77.5\,/\,76.5 \\
\; \textbf{+ ALASCA} & \textbf{96.0\,/\,95.8} & \textbf{94.9\,/\,94.3} & \textbf{93.4\,/\,92.8} & \textbf{78.8\,/\,77.1} & \textbf{93.8\,/\,93.3} & \textbf{77.9\,/\,77.9} & \textbf{73.8\,/\,73.5} & \textbf{61.0\,/\,60.4} & \textbf{34.2\,/\,34.0} & \textbf{77.8\,/\,77.5} \\
\bottomrule
\end{tabular}
}
\label{tab:semi}
\end{table*}

\subsection{Comparison with Zhang et al. (2019)}\label{sec:byot} % \citet{zhang2019your}
Our ALASCA can be seemingly similar to BYOT \cite{zhang2019your}, however, as mentioned in Section~\ref{sec:3.2}, there exists a fundamental difference; our method is based on the theoretical motivation that LS encourages LR and uses auxiliary classifiers for explicitly regularizing intermediate layers. Furthermore, BYOT proceeds with both vanilla knowledge distillation (KD) and feature distillation (FD) which has risks to inject corrupted information from network outputs to feature extractors due to label noise. These risks can be led to sub-optimal performances under label noise. Table~\ref{tab:sd} summarizes the results of the comparison of test accuracies between BYOT and ALASCA on CIFAR-10 under various label noises. While both BYOT and ALASCA work well on lower-level of label noise, the performance gap between ALASCA and BYOT becomes larger as the noise level is larger. We also conduct experiments on the case of using only KD or FD of BYOT, and both cases show lower performances than ALASCA. From the results, we verify that our ALASCA is a robust method in presence of noisy labels that is different from the existing BYOT.

\begin{table}[ht]
\caption{Comparison of performances with BYOT under different noise types and fractions (\%) on CIFAR-10. Since they distill corrupted information, we observe that all distillation methods have lower performances than our ALASCA.} \label{tab:sd}
\centering
\scriptsize{
\begin{tabular}{l|cccc} \toprule
 & Symm. 20\% & Symm. 50\% & Asym. 40\% & Inst. 40\% \\ \midrule % Symm. 80 & 
Stndard & 87.0 $\pm$ 0.1 & 78.2 $\pm$ 0.8 & 85.0 $\pm$ 0.1 & 74.1 $\pm$ 2.9 \\ % & 53.8 $\pm$ 1.0 
BYOT & 91.2 $\pm$ 0.1 & 86.6 $\pm$ 0.1 & 87.7 $\pm$ 0.3 & 76.1 $\pm$ 0.1 \\ % & 69.7 $\pm$ 0.5  
\;- KD & 91.1 $\pm$ 0.1 & 85.9 $\pm$ 0.2 & 89.4 $\pm$ 0.6 & 80.1 $\pm$ 0.4 \\ % & 70.1 $\pm$ 0.9  
\;- FD & 91.1 $\pm$ 0.1 & 86.5 $\pm$ 0.2 & 88.0 $\pm$ 1.5 & 76.4 $\pm$ 1.1 \\ \midrule % & 70.0 $\pm$ 1.3 
ALASCA & \textbf{92.2 $\pm$ 0.2} & \textbf{88.0 $\pm$ 0.3} & \textbf{90.3 $\pm$ 0.3} & \textbf{81.4 $\pm$ 0.3} \\ \bottomrule % & \textbf{70.3 $\pm$ 0.4}
\end{tabular}}
\end{table}

\subsection{Compatibility with Self-supervised Learning.}\label{sec:self}
One of the promising methods under label noise is the self-supervised framework that does not use label information. Since many previous works \cite{cheng2021demystifying, zheltonozhskii2022contrast} show that applying self-supervised learning, which provide robust feature extractor as initialization point, achieves high performances, we evaluate the compatibility of ALASCA with SimCLR \cite{chen2020simple}. We apply a backbone network as ResNet18 with pre-trained weight from \cite{zheltonozhskii2022contrast}. We compare the three approaches: (1) freeze the pre-trained feature extractor and train only linear classifier (linear evaluation); (2) retrain the pre-trained networks without ALASCA; (3) retrain the pre-trained networks with ALASCA. While the previous research shows that self-supervised frameworks are significantly effective in high-level noise ratios, we consider this to be extremely impractical that cannot be found in real-world situations. Instead, we combine ALASCA and self-supervised framework in probable noise rates. For the experiments, we follow the experimental setup of \citet{cheng2021demystifying}. Figure~\ref{fig:contrastive} demonstrates the results on CIFAR-10 under 20\% and 40\% of symmetric, and 40\% of asymmetric and instance-dependent noise rates. We observe that our ALASCA can be compatible with self-supervised frameworks.

\begin{figure*}[ht]
    \hspace*{\fill}
    \begin{subfigure}[b]{0.20\textwidth} % 0.4 % 0.23
    \centering
    \includegraphics[width=\linewidth]{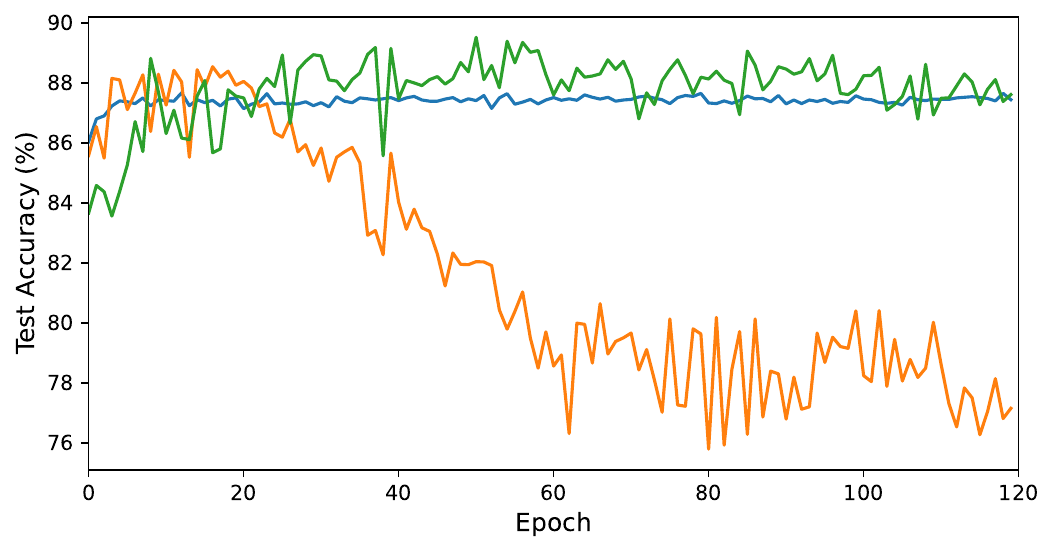}
    \caption{Symm. 20\%}
    \end{subfigure}
    \hfill
    \begin{subfigure}[b]{0.20\textwidth} % 0.4 % 0.23
    \centering
    \includegraphics[width=\linewidth]{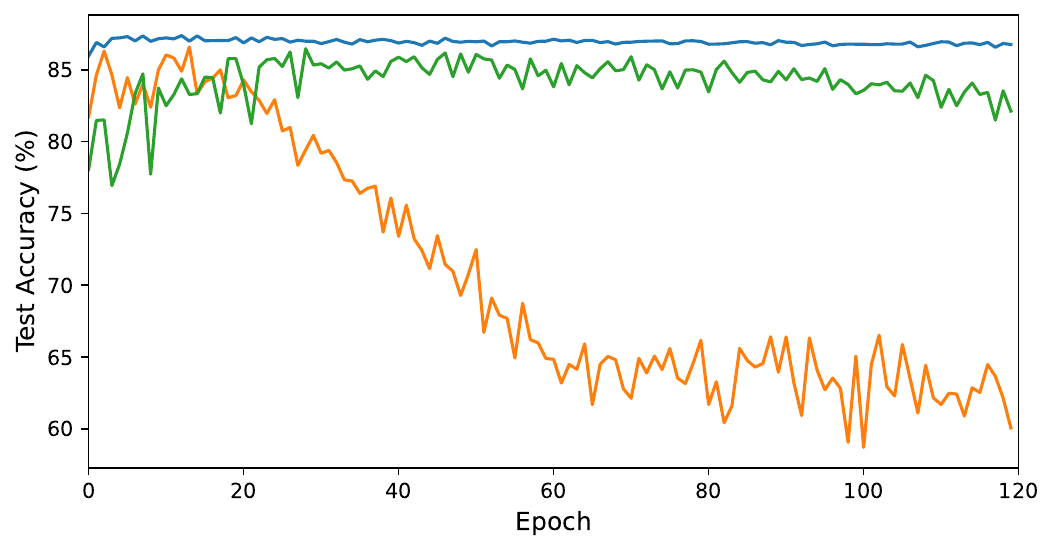}
    \caption{Symm. 40\%}
    \end{subfigure}
    \hfill
    \begin{subfigure}[b]{0.20\textwidth} % 0.4 % 0.23
    \centering
    \includegraphics[width=\linewidth]{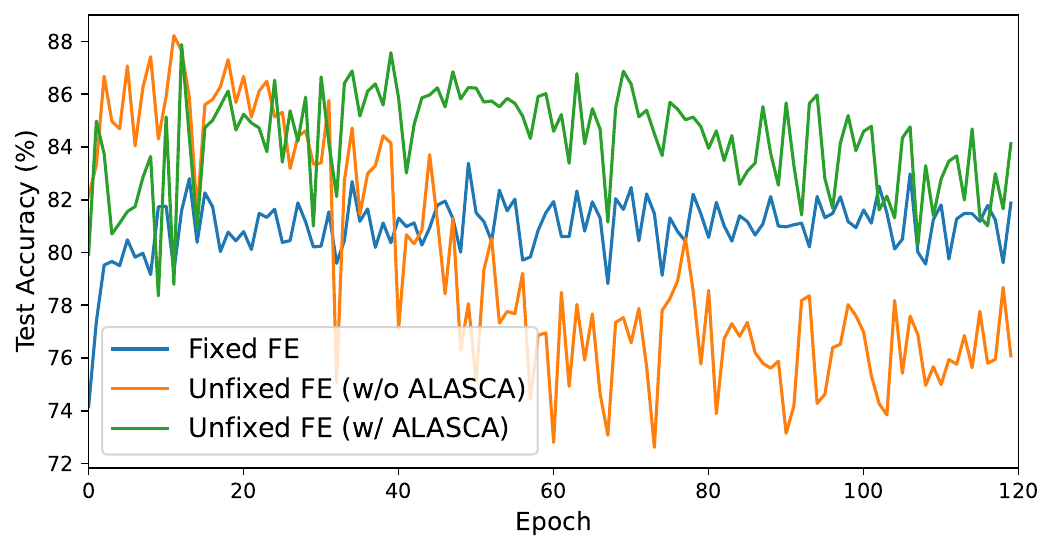}
    \caption{Asym. 40\%}
    \end{subfigure}
    \hfill
    \begin{subfigure}[b]{0.20\textwidth} % 0.4 % 0.23
    \centering
    \includegraphics[width=\linewidth]{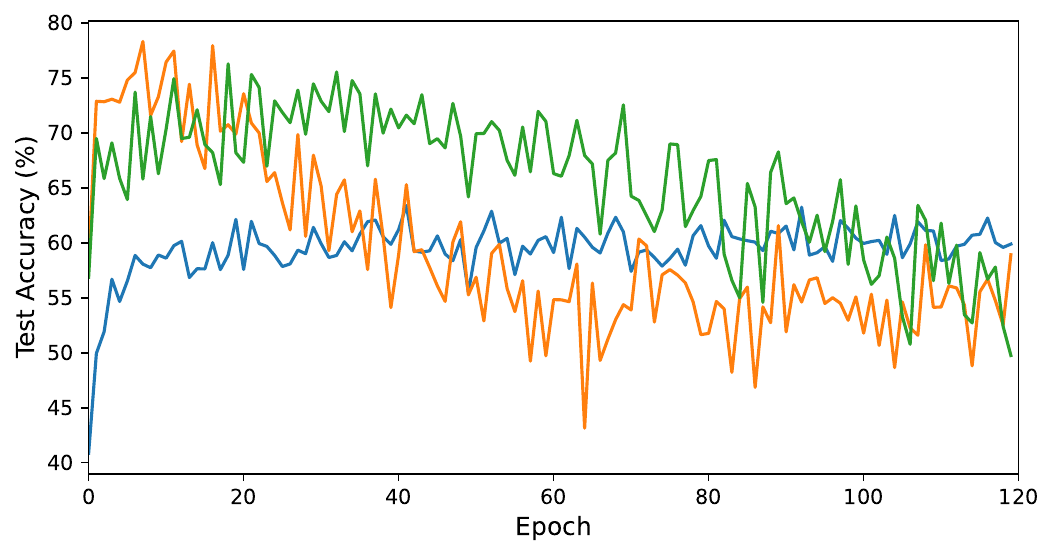}
    \caption{Inst. 40\%}
    \end{subfigure}
    \hspace*{\fill}
    \caption{Comparison for test accuracy across the training epochs. We denote FE as the feature extractor. While the orange lines significantly drop as training proceeds, the green lines retain the performance level in the most of noise distributions. The green lines also show higher performance than the blue lines, except for symmetric noise 40\%, and show competitive performance with the blue line for 40\% of symmetric noise. }\label{fig:contrastive}
\end{figure*}

\subsection{Experimental Results on (mini) WebVision Datset}\label{sec:webvision}
WebVision is a large-scale image dataset collected by crawling Flickr and Google, which resulted in an estimated 20\% of noisy labels \cite{li2017webvision}. In addition to Section~\ref{sec:4.4}, we apply ALASCA to train the model on the mini WebVision dataset and evaluate it on both the WebVision and ImageNet ILSVRC12 validation sets. All methods use an InceptionResNetV2 as backbone architecture and we follow the experimental settings of \citet{liu2020early}. We set the weight decay as $5 \times 10^{-4}$ and batch size as 32. We set the initialized learning rate as 0.02, reduce it by a factor of 10 after 50 epochs (total 100 epochs). We set the noise rate for A-Coteaching as 20\% by following previous studies. Similar to the result in the Clothing1M dataset, our ALASCA performs strongly, despite its simplicity. Furthermore, A-ELR+ and A-Coteaching achieve the highest performance on WebVision and ILSVRC12 validation datasets, respectively. 

\begin{table*}[ht]
\centering
\caption{Comparison with state-of-the-art methods on (mini) WebVision dataset. Numbers denote top-1 (top-5) accuracy (\%) on the WebVision validation set and the ImageNet ILSVRC12 validation set. Results for baselines are copied from original works. A-Coteaching and A-ELR+ denote the methods combining ALASCA with Co-teaching and ELR+.}
%\tiny scriptsize
\scriptsize{
\begin{tabular}{c|c|c|c|c|c|c|c|c|c} \toprule
 & & Co-teaching & CRUST & FINE & HAR & ELR+ & ALASCA & A-Coteaching & A-ELR+ \\ \midrule
\multirow{2}{*}{WebVision} & top1 & 63.58 & 72.40 & 75.24 & 75.50 & 77.78 & 76.09 & 77.14 & \textbf{77.98} \\
 & top5 & 85.20 & 89.56 & 90.28 & 90.70 & 91.68 & 90.85 & 92.19 & \textbf{92.25} \\ \midrule
 \multirow{2}{*}{ILSVRC12} & top1 & 61.48 & 67.36 & 70.08 & 70.30 & 70.29 & 70.34 & \textbf{71.08} & 70.87 \\
 & top5 & 84.70 & 87.84 & 89.71 & 90.00 & 89.76 & 90.25 & 90.87 & \textbf{91.31} \\ \bottomrule
\end{tabular}}
\label{tab:webvision}
\end{table*}

\subsection{Sensitivity Analysis}\label{sec:sensitivity}
The main hyperparameters of ALASCA are the regularization coefficient $\lambda$, EMA weight $\beta$, and EMA sharpening temperature $\tau$. To understand the effect of each hyperparameter, we conduct sensitivity analysis for each hyperparameter on CIFAR-10 under 50\% of symmetric label noise. Figure~\ref{fig:sensitivity} summarizes the performance of ALASCA for different hyperparameter values. 
We observe that our ALASCA is reasonably robust to its own hyperparameter values.
\begin{figure*}[ht]
    \hspace*{\fill}
    \begin{subfigure}[b]{0.27\textwidth} % 0.4 % 0.23
    \centering
    \includegraphics[width=\linewidth]{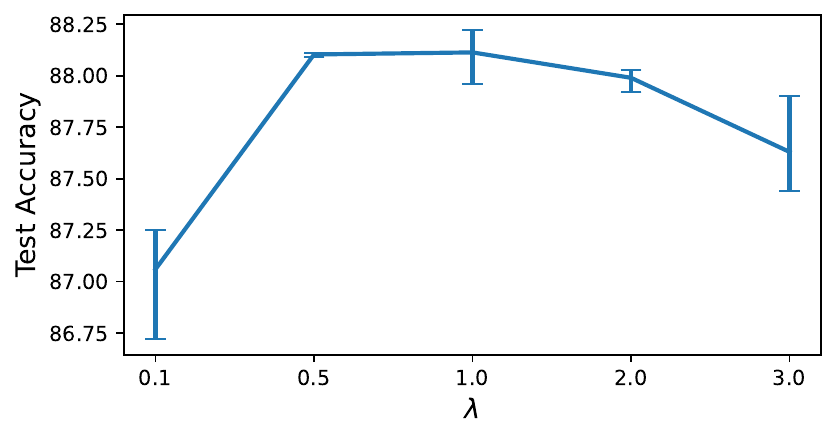}
    \caption{Regularization coefficient ($\lambda$)}
    \end{subfigure}
    \hfill
    \begin{subfigure}[b]{0.27\textwidth} % 0.4 % 0.23
    \centering
    \includegraphics[width=\linewidth]{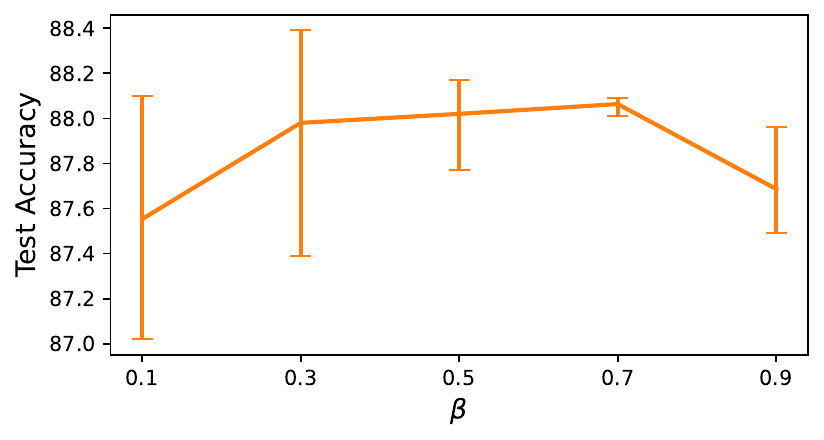}
    \caption{EMA weight ($\beta$)}
    \end{subfigure}
    \hfill
    \begin{subfigure}[b]{0.27\textwidth} % 0.4 % 0.23
    \centering
    \includegraphics[width=\linewidth]{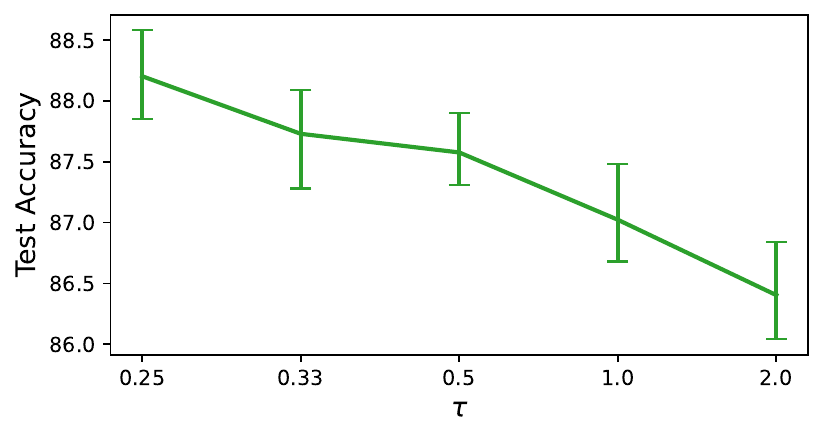}
    \caption{EMA sharpening temperature ($\tau$)}
    \end{subfigure}
    \hspace*{\fill}
    \caption{Test accuracy (\%) for sensitivity analysis on CIFAR-10 under 50\% of noise. The mean accuracy over three runs is reported, along with bars representing the maximum and minimum values. Note that the rest of the hyperparameters are fixed to the values in Section~\ref{sec:4.1}.}\label{fig:sensitivity}
\end{figure*}

\end{document}